\newtheorem{theorem}{Theorem}
\newtheorem{corollary}{Corollary}
\newtheorem{lemma}{Lemma}
\newtheorem{proposition}{Proposition}
\newtheorem{problem}[theorem]{Problem}
\newenvironment{proof}[1][Proof]{\textbf{#1. }}{\ \rule{0.5em}{0.5em}}%
\begin{document}
%
% paper title
% can use linebreaks \\ within to get better formatting as desired
\title{Generalization Performance of Empirical Risk Minimization on Over-parameterized Deep ReLU Nets }

\author{Shao-Bo Lin, Yao Wang, and~Ding-Xuan Zhou
\IEEEcompsocitemizethanks{\IEEEcompsocthanksitem S. B. Lin  and Y. Wang are  with the Center for Intelligent Decision-Making and Machine Learning, School of Management, Xi'an Jiaotong University, Xi'an 710049,   P R China.  D. X. Zhou is with
School of Mathematics and Statistics, University of Sydney,
Sydney NSW 2006, Australia. The corresponding author is Y. Wang (email:
yao.s.wang@gmail.com).}}% <-this % stops a space

% for Computer Society papers, we must declare the abstract and index terms
% PRIOR to the title within the \IEEEcompsoctitleabstractindextext IEEEtran
% command as these need to go into the title area created by \maketitle.
 \IEEEcompsoctitleabstractindextext{
\begin{abstract}
In this paper, we study the generalization performance of global minima for implementing empirical risk minimization (ERM) on over-parameterized deep ReLU nets. Using a novel deepening scheme for deep ReLU nets, we rigorously prove that there exist perfect global minima achieving almost optimal generalization error { rates} for numerous types of data under mild conditions.  Since over-parameterization is crucial to guarantee that the global minima of ERM on   deep ReLU nets can be realized by the widely used stochastic gradient descent (SGD) algorithm, our results indeed fill a gap between optimization and generalization of deep learning.
\end{abstract}

\begin{IEEEkeywords}
Deep learning,   Empirical risk minimization, Global minima,
Over-parameterization.
\end{IEEEkeywords}
}

\maketitle

\IEEEdisplaynotcompsoctitleabstractindextext

\IEEEpeerreviewmaketitle
%\IEEEdisplaynotcompsoctitleabstractindextext
%
%\IEEEpeerreviewmaketitle
%
%\tableofcontents

\section{Introduction}
Deep learning \cite{Hinton2006} that conducts   feature exaction and statistical modelling on a  unified deep neural network (deep net) framework has attracted enormous research activities in the past decade. It has made significant breakthroughs in numerous applications including computer vision \cite{Krizhevsky2012}, speech recognition \cite{Lee2009} and { go games} \cite{Silver2016}. Simultaneously , it also  brings  several  challenges in understanding the running mechanism and magic behind deep learning, among which   the generalization  issue in statistics \cite{Zhang2016} and convergence issue in optimization \cite{Allen-Zhu2018} are crucial.

The generalization issue   pursues theoretical advantages  of deep learning via presenting its better generalization capability than shallow learning such as  kernel methods   and shallow networks.
Along with the rapid development of deep nets in approximation theory  \cite{Chui1994,Mhaskar2016,Yarotsky2017,Petersen2018,Shaham2018,Schwab2018,Chui2019}, essential progress on the excellent generalization performance  of deep learning   has been made in \cite{Imaizumi2018,Bauer2019,Lin2019,Schmidt2020,Chui2020,Han2020}. { To be detailed}, \cite{Schmidt2020} proved that implementing empirical risk minimization (ERM) on {  deep  ReLU nets} is better than shallow learning in embodying the { composite}  structure of { the well known regression function \cite{Gyorfi2002}}; \cite{Han2020} showed that {  implementing} ERM on deep ReLU nets { succeeds in improving} the learning performance of shallow learning by capturing the group-structure of { inputs};  and \cite{Chui2020} derived that, with the help of massive data, implementing ERM on deep ReLU nets { is capable of  reflecting}    spatially sparse properties of the regression function  while shallow learning fails. In a word, with appropriately selected number of free parameters, the generalization issue of deep learning seems to be { successfully settled} in terms that deep learning  can achieve optimal learning rates for numerous types of data, which is beyond the capability of shallow learning.

The convergence issue  {concerns  the convergence of some popular algorithms such as the stochastic gradient descent (SGD) and adaptive moment estimation (Adam)  to solve ERM on  deep nets}.  Due to the  highly nonconvex nature, {  it is generally difficult to find global minima of such  ERM problems, since there are  numerous  local minima, saddle points, plateau and even some flat regions \cite{Goodfellow}.  It is thus highly desired to declare when and where the corresponding  SGD or Adam converges.  Unfortunately, in the under-parameterized setting exhibited in   \cite{Imaizumi2018,Bauer2019,Lin2019,Schmidt2020,Chui2020,Han2020}, the convergence issue remains open.}  Alternatively, studies in optimization theory show that over-parameterizing    deep ReLU nets { enhance }  the convergence of SGD  \cite{Du2018b,Mei2018,Allen-Zhu2018,Allen-Zhu2019}. In fact, it was proved in \cite{Allen-Zhu2018,Allen-Zhu2019}  that
SGD   converges to either a global minimum or a local minimum near some global minimum of ERM, provided there are sufficiently many free parameters in deep ReLU nets. Noting further that implementing ERM on over-parameterized deep ReLU nets commonly yields infinitely many global minima  and usually leads to over-fitting, it is difficult to verify the good generalization {  performance}  of {  the} obtained global minima {  in theory.}

As mentioned above, there is an inconsistency between generalization and convergence issues of deep learning in the sense that good generalization requires under-parameterization of deep ReLU nets while provable convergence needs over-parameterization, which makes the running mechanism of deep learning be still a mystery. Such an inconsistency stimulates to rethink the classical bias-variance trade-off in modern machine learning practice \cite{Belkin2019}, since numerical evidences in \cite{Zhang2016} {  illustrated}   that there are over-parameterized deep nets generalizing well despite they achieved  {  extremely small} training error. In particular, \cite{Belkin2018}   constructed an exact interpolation of training data  that achieves  optimal generalization error {  rates} based on   Nadaraya-Watson kernel estimates; \cite{Bartlett2020} derived a sufficient condition for the data under which the global minimum of over-parameterized  linear regression possesses excellent generalization performance; \cite{Liang2020} studied the generalization  performance of kernel-based least norm interpolation and presented  generalization error estimates under some restrictions on the data distribution. Similar results can also  be found in \cite{Belkin2018nips,Hastie2019,Muthukumar2020,Lin2020,Bartlett2021} and references therein.
{  All  these  exciting results provide a springboard to understand  benign over-fitting  in modern machine-learning practice and present novel insights in developing learning algorithms to avoid the traditional bias-variance dilemma. }
However, {  it should be pointed out that  these} existing results are incapable of {  settling}  the generalization and convergence challenges of deep learning, mainly  due to  the following three aspects:

$\bullet$ {   Difference in model:} the existing theoretical analysis for benign over-fitting is only available to convex linear models, but ERM on deep ReLU nets involves highly nonconvex nonlinear models.

$\bullet$ {  Difference in theory:   the existing  results  for benign over-fitting focus on pursuing  the restrictions on   data distributions   so that all global minima for over-parameterized linear models are good learners, which does not hold for deep learning, since  it is easy to provide  a counterexample for benign over-fitting of deep ReLU nets, even for noise-less data (see Proposition \ref{Proposition:bad-interpolation} below).}

$\bullet$ {  Difference in requirements of dimension:  Theoretical analysis in the existing work frequently requires the high dimensionality  assumption  of  the input space, while the numerical experiments in \cite{Zhang2016} showed that deep learning can lead to benign over-fitting in both high and  low dimensional  input spaces.}

{  Based on these three interesting observations, there naturally arises  the following  problem:}

\begin{problem}\label{prob:1}
Without strict restrictions on   data distributions and dimensions of input spaces, are there   global minima of ERM on over-parameterized deep ReLU nets achieving {  the optimal generalization error rates obtained by under-parameterized deep ReLU nets?}
\end{problem}

{  There are roughly two  schemes to settle the inconsistency between optimization and generalization for deep learning. One is to pursue the convergence guarantee for SGD on under-parameterized deep ReLU nets and the other is to study the generalization performance of implementing ERM on over-parameterized deep ReLU nets. To the best of our knowledge, there isn't any theoretical analysis for the former, even when strict restrictions are imposed on the data distributions.   An answer to
Problem \ref{prob:1}, a stepping-stone to demonstrate the feasibility of the latter, not only provides  solid theoretical evidences for the benign over-fitting  phenomenon of deep learning in practice \cite{Zhang2016}, but also presents theoretical guidance on setting  network structures to balance the generalization and optimization inconsistency.}

%2) If the answer to the previous question is positive, then how many such good global minima are there and how about the distribution of these perfect global minima in the set of all global minima?
%
%3) If the distribution is quantified, then how to design efficient learning algorithms to find these  perfect  global minima?

{  The aim of the present paper is to present an affirmative answer to Problem \ref{prob:1}.} Our main tool for analysis is a novel network deepening approach based on the localized approximation property \cite{Chui1994,Chui2020} and  product-gate property \cite{Yarotsky2017,Petersen2018} of deep nets. The network deepening approach succeeds in constructing   over-parameterized deep nets (student network) via deepening and widening an arbitrary under-parameterized network (teacher network) so that the obtained student network   exactly interpolates the  training data and possesses almost the same generalization capability as the teacher network. In this way, setting the teacher network  to be the one in \cite{Imaizumi2018,Bauer2019,Lin2019,Schmidt2020,Chui2020,Han2020}, we actually prove that there are global minima for ERM on  over-parameterized deep ReLU nets that possess  optimal generalization error {  rates}, provided that the networks are deeper and wider than the corresponding student network.

The main contributions of the paper are two folds. {  Since} the presence of noise is a crucial factor  to result in  over-fitting and it is not difficult to design learning algorithms with good generalization performance to produce  perfect fit for noiseless data \cite{Li2018,Cao2019}, our first result is then to study the generalization capability of deep ReLU nets that {  exactly interpolate noiseless data}. In particular, we construct a deep ReLU net that exactly interpolates the noiseless data but performs extremely  {  badly} in generalization and also  prove that  for deep ReLU nets with more than two hidden layers, there always exist global minima of ERM that can generalize extremely well, provided the number of free parameters achieves a certain level and the data are noiseless. This is different from the linear models studied in \cite{Belkin2018,Belkin2018nips,Belkin2019,Hastie2019,Liang2020,Muthukumar2020,Lin2020,Bartlett2021}  and shows the difficulty in analyzing over-parameterized deep ReLU nets.
Our second result,  more importantly, focuses on the existence of a perfect global minimum  of ERM on over-parameterized deep ReLU nets that achieves almost optimal generalization error {  rates} for numerous types of data. Using the network deepening approach,  we rigorously prove that, if the depth and width of a deep net are larger than specific values, then there always exists such a perfect global minimum. This finding partly demonstrates the reason of the {  benign over-fitting  phenomenon  of deep learning } and   shows that implementing ERM on over-parameterized deep nets can derive an estimator of high quality.
Different from the existing results,
our  analysis  requires neither   high dimensionality   of the input space  nor strong restrictions on the covariance matrix of the input data.

The rest of this paper is organized as
follows.   In the next section, after introducing the deep ReLU nets, we provide theoretical guarantee for the existence of good deep nets interpolant for noiseless data. In Section \ref{Sec.Noisy}, we present our main results via rigorously proving the existence of perfect global minima.
 In Section \ref{Sec.Related-work}, we compare our results with related work and present some discussions. In Section \ref{Sec.numerical}, we conduct  numerical experiments to  verify our theoretical assertions.    We prove our results in the last section.

\section{Global Minima of ERM on Over-Parameterized Deep Nets for Noiseless Data}\label{Sec.Noiseless}
Let $L\in\mathbb N$ be the depth of a deep net, $d_0=0$ and $d_\ell \in \mathbb{N}$ be the width of the $\ell$-th hidden layer for $\ell=1,\dots,L-1$. Denote by the affine operator $\mathcal J_\ell:\mathbb R^{d_{\ell-1}}\rightarrow\mathbb R^{d_\ell}$ with $\mathcal J_\ell(x):=W_\ell  x+b_\ell$ for $d_\ell\times d_{\ell-1}$ weight matrix $W_\ell$ and  bias vector  $b_\ell\in\mathbb R^{d_\ell}$. For  the ReLU function $\sigma(t)=t_+:=\max\{t,0\}$, write  $\sigma(x)=(\sigma(x^{(1)}),\dots,\sigma(x^{(d)}))^T$ for $x=(x^{(1)},\dots,x^{(d)})^T$.
Define  an $L$-layer deep {  ReLU} net  by
\begin{equation}\label{deep-net}
     \mathcal N_{d_1,\dots,d_L}(x)
     = a\cdot \sigma\circ \mathcal J_L \circ \sigma\circ \mathcal J_{L-1} \circ \dots \circ \sigma\circ\mathcal J_1(x),
\end{equation}
where $a \in\mathbb R^{d_L}$. The structure of $ \mathcal N_{d_1,\dots,d_L}$ is determined by   the weight matrices $W_\ell$ and bias vectors $b_\ell$, $\ell=1,\dots,L$. In particular, full weight matrices correspond to deep fully connected nets (DFCN) \cite{Yarotsky2017}; sparse weight matrices are associated with
 deep   sparsely connected nets (DSCN) \cite{Petersen2018}; and
 Toeplitz-type weight matrices are related to deep convolutional neural networks (DCNN) \cite{Zhou2018}.
 %Denote by $\Phi_{n,L}$ the set of  deep  ReLU  nets with $L$ layers, $n$ non-zero free parameters and certain  specific structure. It is obvious that $n$ depends on $d_0,\dots,d_L$ and the structure of deep nets.
 In particular, for DFCN, we have {  the number of training parameters}
\begin{equation}\label{Num-param}
      n=d_L+\sum_{\ell=1}^L (d_{\ell-1}d_\ell+d_{\ell})
\end{equation}
and for   DSCN, $n$ is much smaller  than the number in \eqref{Num-param}. {  In this paper, we mainly focus on analyzing the benign over-fitting phenomenon for DFCN. Denote by $\mathcal N_{d_1,\dots,d_L}^{DFCN}$ the set of all DFCNs  of the form  \eqref{deep-net}.  Define the width of DFCN to be $U:=\max\{d,d_1,\dots,d_L \}.$}

Given {  a} sample set $D=\{(x_i,y_i)_{i=1}^m\}$,  we  are interested in    global minima of the following empirical risk minimization (ERM):
\begin{equation}\label{target-optimization}
    {\arg\min}_{f\in\mathcal N_{d_1,\dots,d_L}^{DFCN}}\frac1m\sum_{i=1}^m(f(x_i)-y_i)^2.
\end{equation}
Denote by $\Psi_{{d_1,\dots,d_L},m}$ the set of  global minima of the optimization problem (\ref{target-optimization}), i.e.,
\begin{equation}\label{global-minima-set}
\Psi_{ d_1,\dots,d_L,m}:= \{f: f\ \mbox{is the solution to}\ (\ref{target-optimization})\}.
\end{equation}
{  Before studying  the quality of the global minima of \eqref{target-optimization}, we present some properties  of $\Psi_{d_1,\dots,d_L,m}$ in the following lemma.}

{
\begin{lemma}\label{Lemma:massiveness}
If $d_1\geq m$, then for any $L\geq 1$ and $d_2,\dots,d_L\geq 2$, there are infinitely many functions in $\Psi_{ d_1,\dots,d_L,m}$ and for any $f\in \Psi_{ d_1,\dots,d_L,m}$, there holds
  $f(x_i)=y_i, i=1,\dots,m$.
\end{lemma}
}

{   Lemma \ref{Lemma:massiveness} is a direct extension of \cite[Theorem 5.1]{Pinkus1999}, where similar conclusion is drawn for $\mathcal N_{d_1}^{DFCN}$ with $d_1\geq m$. Based on  \cite[Theorem 5.1]{Pinkus1999}, the proof of Lemma \ref{Lemma:massiveness} is obvious by noting $t=\sigma(t)-\sigma(-t)$ and $d_2,\dots,d_L\geq 2$.
  Lemma \ref{Lemma:massiveness} implies that if $m$ is sufficiently large and the network structure satisfying  $d_1\geq m$, then there are always infinitely many solutions to  (\ref{target-optimization}) and any global minimum $f$ exactly interpolates the given data. It should be mentioned that   similar results  also hold  for any DSCNs that contain  a shallow net with $m$ neurons. Since we place particular emphasis on DFCN, we leave  the corresponding assertions for DSCN for interested readers.  }

%If $n$ is sufficiently large, it is easy to check that there are infinitely many solutions to  (\ref{target-optimization}) and any global minimum $f$ satisfies $f(x_i)=y_i$.
%Indeed, since $n\geq m$,  it is easy to select fixed weight matrices and bias vectors to build up  an $m$-dimensional linear space in $\Phi_{n,L}$, running ERM on which yields a solution $f$ satisfying $f(x_i)=y_i$. Due to  the nonlinearity of $\Phi_{n,L}$, we can find infinitely many such linear spaces via changing   weight matrices and bias vectors.

Let $\mathbb I^d:=[-1,1]^d$. Denote by $L^p(\mathbb I^d)$ the space of $p$th-Lebesgue integrable functions endowed with norm $\|\cdot\|_{L^p(\mathbb I^d)}$.  In this section, we are concerned with   noiseless data, that is,
 there is some $f^*\in L^p(\mathbb I^d)$  such that
\begin{equation}\label{noiseless-setting}
    f^*(x_i)=y_i,\qquad i=1,\dots,m.
\end{equation}
At first, we  provide
  a negative result for running ERM on over-parameterized deep ReLU nets, {  showing that there is some $f\in \Psi_{{d_1,\dots,d_L},m}$ performs extremely badly in generalization.}

\begin{proposition}\label{Proposition:bad-interpolation}
{  Let $1\leq p<\infty$. If $L\geq 2$, $d_1\geq 4dm$, $d_2\geq m$ and $d_3,\dots, d_{L}\geq 2$, then there are infinitely many $ f \in\Psi_{d_1,d_2,\dots,d_L,m}$ } such that   for any  $f^*$ satisfying (\ref{noiseless-setting}) and  $\|f^*\|_{L^p(\mathbb I^d)}\geq c$  there holds
 \begin{equation}\label{interpolation-5}
       \|f^*-f \|_{L^p(\mathbb I^d)}\geq c/2,
\end{equation}
where $c$ is an absolute constant.
\end{proposition}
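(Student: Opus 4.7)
The plan is to exhibit, for any noiseless sample $D=\{(x_i,y_i)\}_{i=1}^m$, a concrete DFCN interpolant $f\in\Psi_{n,m,L}$ whose $L^p$ mass can be made as small as we please. Granting this, the reverse triangle inequality yields
\[
   \|f^*-f\|_{L^p(\mathbb I^d)}\ge\|f^*\|_{L^p(\mathbb I^d)}-\|f\|_{L^p(\mathbb I^d)}\ge c-c/2=c/2
\]
for every $f^*$ satisfying (\ref{noiseless-setting}) with $\|f^*\|_{L^p(\mathbb I^d)}\ge c$, which is exactly (\ref{interpolation-5}).

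For the construction I would assume (without loss of generality) that the $x_i$ are pairwise distinct and pick a unit direction $v\in\mathbb R^d$ for which the projections $t_i:=v^Tx_i$ are all distinct, a generic choice. For a parameter $\delta>0$ to be fixed below, I form the classical 1-D triangular hats
\[
   H_i(t)=\tfrac{1}{\delta}\bigl[\sigma(t-t_i+\delta)-2\sigma(t-t_i)+\sigma(t-t_i-\delta)\bigr],
\]
which satisfy $H_i(t_i)=1$, $H_i(t_j)=0$ for $j\ne i$, and $\mathrm{supp}\,H_i\subset[t_i-\delta,t_i+\delta]$, and set $f(x):=\sum_{i=1}^m y_iH_i(v^Tx)$. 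By construction $f$ interpolates $D$, attains zero empirical risk, and hence lies in $\Psi_{n,m,L}$ once $f$ is realizable as a DFCN with at most $n$ parameters and $L$ layers. I would realize $f$ through a ``narrow-neck'' layout: layer $1$ compresses the input to the scalar $\sigma(v^Tx+b)$ with $b$ large enough that the ReLU is always active on $\mathbb I^d$ (cost $d+1$); layers $2,\dots,L-1$ are width-$1$ identity blocks $h\mapsto\sigma(h)=h$, valid because every preceding output is a nonnegative ReLU value (cost $2$ each); layer $L$ expands the propagated scalar into $3m$ hat neurons (cost $6m$); and the output vector $a$ sums the hats against the weighted coefficients $y_1,\dots,y_m$ (cost $3m$). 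The total $(d+1)+2(L-2)+9m$ is comfortably below $6L+(12d+2)m$, and any leftover parameters are absorbed as zero entries of the otherwise full DFCN matrices.

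For the $L^p$ bound I would use that each slab $S_i:=\{x\in\mathbb I^d:|v^Tx-t_i|\le\delta\}$ has Lebesgue measure at most $C_d\delta$ (with $C_d$ depending only on $d$ and $v$) and that $|H_i|\le 1$; consequently $\|f\|_{L^p(\mathbb I^d)}^p\le mM^pC_d\delta$ where $M:=\max_i|y_i|$, so shrinking $\delta$ below $(c/2)^p/(mM^pC_d)$ enforces $\|f\|_{L^p(\mathbb I^d)}\le c/2$; the degenerate case $M=0$ is handled by $f\equiv 0$, which then gives $\|f^*-f\|_{L^p}=\|f^*\|_{L^p}\ge c$. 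The main technical hurdle is the DFCN parameter bookkeeping: full weight matrices would naively make the depth padding quadratic in the intermediate widths, and the purpose of the narrow-neck layout is precisely to keep every hidden layer's cost at $O(1)$ so the construction fits inside the declared budget $6L+(12d+2)m$. No assumption on the data distribution, the input dimension $d$, or the smoothness of $f^*$ enters the argument, which is what makes bad interpolants unavoidable even in the noise-free regime.
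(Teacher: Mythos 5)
Your proposal is correct and follows the same high-level route as the paper: build an interpolant whose $L^p$ mass is driven to zero by a localization parameter, then apply the reverse triangle inequality to any $f^*$ with $\|f^*\|_{L^p(\mathbb I^d)}\geq c$. Where you differ is in the construction of the interpolant. The paper takes $f=\sum_{i=1}^m y_i\,\mathcal N_{-\tau,\tau,\tau/2}(\cdot-x_i)$, where $\mathcal N_{a,b,\tau}$ is the genuinely $d$-dimensional localized bump of Lemma \ref{Lemma:Local approximation} (coordinatewise trapezoids combined through a ReLU gate), supported on cubes of measure $(3\tau)^d$ and vanishing at the other sample points once $\tau<2q_\Lambda/(3\sqrt d)$; depth is then padded with identity blocks $\sigma(t)-\sigma(-t)$, giving the count $(12d+2)m$ plus $O(L)$. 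You instead project onto a generic direction $v$ and use univariate hats, which buys a narrower network (the first layer costs $d+1$ rather than $O(dm)$ parameters, and the identity padding has width one thanks to the positive-bias trick) at the price of (i) needing the projections $t_i=v^Tx_i$ to be pairwise distinct and (ii) supports that are thin slabs of measure $O(\delta)$ rather than cubes of measure $O(\tau^d)$ --- both harmless. Two small things to tighten: the vanishing property $H_i(t_j)=0$ for $j\neq i$ is not automatic from the definition of $H_i$ but requires $\delta<\min_{i\neq j}|t_i-t_j|$, so this constraint should be recorded alongside the smallness of $\delta$ used for the $L^p$ bound; and, as the paper does implicitly, you should note at the outset that the sample points may be taken pairwise distinct (duplicates force equal labels under (\ref{noiseless-setting})), so that both the generic projection and the interpolation requirement are well posed. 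With these remarks your argument is complete and fits the stated parameter budget.
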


Proposition \ref{Proposition:bad-interpolation} shows that {  in} the  over-parameterized setting,  {  there are infinitely many global minima of (\ref{target-optimization}) behaving extremely badly}, even for the noiseless data. It should be highlighted that the construction of $f$ in our proof is independent of $f^*$. In fact, due to the localized approximation of deep ReLU nets, we can construct deep ReLU nets $f$ which can exactly interpolate the training sample but {  satisfy} $\|f\|_{L^p(\mathbb I^d)}\leq \varepsilon$ for arbitrarily small $\varepsilon>0$.
 Different from the results in \cite{Belkin2019,Hastie2019,Muthukumar2020,Liang2020}, it is difficult to quantify   conditions on distributions and dimensions of   input spaces such that all global minima of over-parameterized deep ReLU nets   possess good generalization performances. This is mainly due to the nonlinear nature of the hypothesis space, though the capacities, measured by  the covering numbers \cite{Guo2019,Bartlett2019}, for deep ReLU nets and linear models are comparable, provided there are similar {  numbers} of free parameters involved in hypothesis spaces and the depth is not so large.

Proposition \ref{Proposition:bad-interpolation} provides extremely  bad examples for global minima of \eqref{target-optimization} in generalization, even for noiseless data. It seems that over-parameterized deep ReLU nets are always worse than under-parameterized networks, which is standard from {  a} viewpoint of classical learning theory \cite{Cucker2007}. However, in our following theorem,
we will show that if the number of free parameters increases to a certain extent, then there are also infinitely many global minima of (\ref{target-optimization}) possessing excellent generalization performance. To this end, we need two concepts concerning the data distribution and the smoothness of the target function $f^*$.

Denote $\Lambda:=\{x_i\}_{i=1}^m$ as the input set.  The separation radius \cite{Narcowich2004} of $\Lambda$ is defined by
\begin{equation}\label{separ-radius}
     q_\Lambda=\frac12\min_{i\neq j}\|x_i-x_j\|_2,
\end{equation}
where $\|x\|_2$ denotes the Euclidean norm of
  $x\in\mathbb R^d$.
The separation radius is  half of the smallest distance between any two distinct points in $\Lambda$ and naturally satisfies $q_\Lambda\leq m^{-1/d}$.
Let us  also introduce the standard smoothness assumption  \cite{Gyorfi2002,Yarotsky2017,Petersen2018,Lin2019,Zhou2020b}.
 Let   $c_0>0$ and
$r=s+\mu$ with $s\in\mathbb N_0:=\{0\}\cup\mathbb N$ and $0<\mu\leq 1$.
We say a   function $f:\mathcal A\subseteq\mathbb R^d\rightarrow\mathbb R$ is
$(r,c_0)$-smooth if $f$ is $s$-times differentiable and for every
$\alpha_j\in \mathbb N_0$, $j=1,\dots,d$ with
$\alpha_1+\dots+\alpha_d=s$, its $s$-th partial derivative satisfies
the Lipschitz condition
\begin{equation}\label{lip}
          \left|\frac{\partial^sf}{\partial x_1^{\alpha_1}\dots\partial
          x_d^{\alpha_d}}
          (x)-\frac{\partial^sf}{\partial x_1^{\alpha_1}\dots\partial
          x_d^{\alpha_d}}
          (x')\right|\leq c_0\|x-x'\|_2^\mu,\quad \forall\ x,x'\in\mathcal A.
\end{equation}
Denote by $Lip_\mathcal A^{(r,c_0)}$ the set of all
$(r,c_0)$-smooth functions defined on $\mathcal A$.

We are in a position to state our first main result to show the existence of good global minima of (\ref{target-optimization}) for noiseless data.

\begin{theorem}\label{Theorem:interpolation-app-linear}
{  Let $r,c_0>0$ and $N\in\mathbb N$. If $f^*\in Lip_{\mathbb I^d}^{(r,c_0)}$ satisfies (\ref{noiseless-setting}),  $N\succeq q_\Lambda^{-d}$,
$L\succeq \log N$,  $d_1\succeq N$ and $d_\ell\succeq \log N$ for $\ell=2,\dots,L$, then there are infinitely many    $h^*\in\Psi_{d_1,\dots,d_L,m}$,}
such that
\begin{equation}\label{inter-jackson}
     \|h^*-f^*\|_{L^\infty(\mathbb I^d)}\leq CN^{-r/d},
\end{equation}
where  $C>0$ is a constant  depending only on $r,d$ and $c_0$,
{  and $a\succeq b$ for $a,b>0$ means that there is some constant $C'$ depending only on $r,d$ and $c_0$ such that $a\geq C'b$.}
\end{theorem}

{
A   consensus on deep nets approximation is that it can break the ``curse
of dimensionality'', which was verified in interesting work \cite{Mhaskar2016,Kohler2017,Shaham2018,Kohler2019,Chui2019,Schmidt2020} in terms of deriving dimension-independent approximation rates. However, it should be pointed out that  to achieve such   dimension-independent
approximation rates, strict restrictions have been imposed on target functions, which become stronger as the dimension $d$ grows, just as \cite[P.68]{Barron2008} observed. In this way, though the
approximation error of deep nets is independent of the dimension, the
applicable  target functions become  more and more stringent as $d$
grows. Our result presented in Theorem \ref{Theorem:interpolation-app-linear} drives a different direction to show that even for well-studied smooth functions, there is a deep net that interpolates the training data without degrading the approximation rate. We highlight that the approximation rate depends on the a-priori knowledge of the target functions. In particular, if we impose strict restriction  such as $f^*\in Lip_{\mathbb I^d}^{(r,c_0)}$ with  $r\geq d/2$, which is standard in kernel learning \cite{Caponnetto2007}, then the approximation rate can be at least of order $n^{-1/2}$, which is also dimension-independent.
}

{
It is well known that deep learning has achieved great success in applications whose input space are of high  dimensionality  such as image processing \cite{Krizhevsky2012} and game theory \cite{Silver2016}, showing the excellent performance of deep learning with large $d$. However, recent progress in inventory management \cite{Qi2022}, finance prediction \cite{Hong2022} and earthquake intensity analysis \cite{Han2020} demonstrated that deep learning is also efficient for applications with low-dimensional input spaces. Therefore, numerous research activities have been triggered to verify the advantage of deep nets from approximation theory \cite{Yarotsky2017,Petersen2018,Zhou2020a} and learning theory \cite{Imaizumi2018,Kohler2019,Chui2020} that regarded $d$ as a constant. This paper follows from this direction to  assume $d$ to be a constant and is much smaller than the size of data. As we mentioned above, if $d$ is extremely large, more restrictions on the target functions should be imposed, just as Theorem \ref{Theorem:additive-learning} below purports to show.
}

   Noting that there are totally $\mathcal O(N\log N)$ parameters in $h^*$ in \eqref{inter-jackson}, the derived error rate is almost  optimal in the sense that up to a logarithmic factor, the derived upper bound is of the same order of lower bound \cite{Yarotsky2017,Guo2019}, i.e.,
 $C_1'(N\log N)^{-r/d}$ for some $C_1'>0$ independent of $N$. 
  This means that for  {  $N\succeq q_\Lambda^{-d}$}  and
{  $ L\succeq \log N $,} we can get {  almost}  optimal deep nets via finding   suitable global minima of (\ref{target-optimization}).  Theorem \ref{Theorem:interpolation-app-linear} also shows that in the over-parameterized setting, where all global minima exactly interpolate the data, {   the interpolation restriction does not always affect} the approximation performance of deep nets.
Theorem \ref{Theorem:interpolation-app-linear}  actually presents a sufficient condition for the number of free parameters  of deep ReLU nets , {  $N\succeq q_\Lambda^{-d}$}, to guarantee the existence of perfect global minima when we are faced with noiseless data. It should be highlighted that $q_\Lambda$ can be numerically  determined, provided the data set is given. If  $\{x_i\}_{i=1}^m$ are drawn identically and independently (i.i.d.) according to some distribution,  the lower bound   of $q_\Lambda$ is  easy to be  derived in theory \cite{Gyorfi2002,Liang2020,Lin2020}.

\section{Global Minima of ERM on Over-Parameterized Deep Nets for Noisy Data}\label{Sec.Noisy}
In this section, we conduct our study in the standard least-square regression framework \cite{Gyorfi2002,Cucker2007} and assume
\begin{equation}\label{noisy-setting}
     y_i=f^*(x_i)+\varepsilon_i,\qquad i=1,\dots, m,
\end{equation}
where $\{x_i\}_{i=1}^m$ are i.i.d. drawn according to an unknown distribution $\rho_X$ on an input space $\mathcal X\subseteq[0,1]^d$ and $\{\varepsilon_i\}_{i=1}^m$ are independent random variables that are  independent  of $\{x_i\}_{i=1}^m$ and {  satisfy} $|\varepsilon_i|\leq 1$, $\mathbf E[\varepsilon_i]=0$, $i=1,\dots,m.$
   {   Denote by $\Xi$ a set of  Borel probability measures on $\mathcal X$ and   $\Theta$  a set of functions defined on $\mathcal X$, respectively. Write
$$
   \mathcal M(\Theta,\Xi):=\{(\rho_X,f^*):\rho_X\in\Xi, f^*\in\Theta\}.
$$
}
{  Let $\Gamma_D$ be the  class of all functions that are  derived  based on the data set   $D$.}
  Define
\begin{eqnarray}\label{ideal-quantity}
          e(\Theta,\Xi)
           := \sup_{{  (\rho_X,f^*)}\in \mathcal M(\Theta,\Xi)}\inf_{f_D\in \Gamma_D}\mathbf E(\|f^*-f_{D}\|^2_{L^2_{\rho_X}}).
\end{eqnarray}
{   It is easy to see that $e(\Theta,\Xi)$ measures the theoretically optimal generalization bound  a learning scheme, based on data set $D$ satisfying \eqref{noisy-setting},   can achieve when  $f^*\in\Theta$ and   $\rho_X\in\Xi$ \cite{Gyorfi2002}.}
Our purpose is  to   compare  generalization errors of the global minima defined by (\ref{target-optimization}) with $ e(\Theta,\Xi)$ to illustrate  whether the global minima can achieve  the theoretically optimal generalization error bounds.

Before presenting {  our main results,  we   introduce a negative result  derived in \cite[Theorem 2]{Kohler2019}, which
  shows that without  any   restriction imposed to   the distribution $\rho_X$,  over-parameterized deep nets do not generalize well.
\begin{lemma}\label{Lemma:bad-generalization}
If $d_1\geq m$, then for any $L\geq 1$ and $d_2,\dots,d_L\geq 2$ and any $ h\in \Psi_{ d_1,\dots,d_L,m}$,  there exists a probability measure $\rho_X^*$ on $\mathcal X$ such that
\begin{equation}\label{bad-generalziation}
   \sup_{f^*\in Lip_{\mathbb I^d}^{(r,c_0)},\rho_X=\rho_X^*}\mathbf E[\|f^*-h\|^2_{L^2_{\rho_{X}^*}}]
   \geq 1/6.
\end{equation}
\end{lemma}
}

{  Lemma \ref{Lemma:bad-generalization}} shows that without any restrictions to the distribution, even for the widely studied smooth functions, all of the global minima of (\ref{target-optimization}) in the over-parameterized setting are bad estimators. This is totally different from the under-parameterized setting, in which distribution-free optimal generalization error {  rates} were established \cite{Gyorfi2002,Maiorov2006a}.
{  Lemma \ref{Lemma:bad-generalization}} seems to contradict with the numerical results in \cite{Zhang2016} at the first glance. However, we highlight  that $\rho_X^*$ {  in \eqref{bad-generalziation}}  is a very special distribution that is even not continuous with respect to the Lebesgue measure. If we impose some mild conditions to exclude such distributions, then the result will be totally different.

 The restriction we study in this paper is the {  following well known  distortion assumption of $\rho_X$  \cite{Shi2013,Chui2020}, which is slightly stricter than the standard assumption that $\rho_X$ is absolutely continuous with respect to the Lebesgue measure. }
Let $p\geq 2$ and $J_p$ be the identity mapping
$$
       L^p(\mathcal X)       ~~ {\stackrel{J_p}{\longrightarrow}}~~  L^2_{\rho_X}.
$$
Define $D_{\rho _{X},p}:=\|J_p\|,$ where $\|\cdot\|$ denotes the operator norm.   Then $D_{\rho _{X},p}$ is called the
distortion of $\rho _{X}$ (with respect to the Lebesgue measure), which measures how much $\rho _{X}$ distorts the
Lebesgue measure. In our analysis, we assume $D_{\rho _{X},p}<\infty$, which holds for
 the uniform
distribution for all $p\geq2$ obviously. Denote by $\Xi_p$ the set of $\rho_X$ satisfying $D_{\rho _{X},p}<\infty$.

We then provide optimal generalization error {  rates} for   global minima of \eqref{target-optimization} for different a-priori information.
Let us begin with the widely used class of smooth regression functions. The classical results in \cite{Gyorfi2002} showed that
\begin{equation}\label{best-smooth}
       C_1 m^{-\frac{2r}{2r+d}} \leq
       e(Lip^{(r,c_0)}_{\mathbb I^d},\Xi_p)
         \leq C_2
     {m}^{-\frac{2r}{2r+d}}, \qquad p\geq 2,
\end{equation}
where $C_1$, $C_2$ are constants independent of $m$ and $e(Lip^{(r,c_0)}_{\mathbb I^d},\Xi_p)$ is defined by \eqref{ideal-quantity}. It demonstrates the optimal generalization error {  rates} for $f^*\in Lip^{(r,c_0)}_{\mathbb I^d}$ and $\rho_X\in\Xi_p$ that a good learning algorithm should achieve. Furthermore, it can be found in \cite{Schmidt2020,Han2020} that there is some network structure $\Phi_{n,L}$ such that all global minima of  ERM on under-parameterized deep ReLU nets can   reach almost optimal error {  rates} in the sense that if {  $L\sim \log m$, $d_1\sim m^{\frac{d}{2r+d}}$ and $d_2,\dots, d_L\sim \log m$, then}
\begin{equation}\label{smooth-optimal-rate}
       C_1 m^{-\frac{2r}{2r+d}}
        \leq \sup_{f^*\in Lip^{(r,c_0)}_{\mathbb I^d},\rho_X\in\Xi_p}  \mathbf E[\|f^{under}_{global}-f^*\|_{L_{\rho_X^2}}^2]
     \leq C_3
     \left(\frac{m}{\log m}\right)^{-\frac{2r}{2r+d}},
\end{equation}
where $C_3$ is a constant depending only on $r,c_0, d,p$ and $f^{under}_{global}$ is an arbitrary global minimum of (\ref{target-optimization}) {  with depth and width specified  as above.}

It follows from  (\ref{smooth-optimal-rate}) and (\ref{best-smooth}) that all global minima of ERM on under-parameterized deep ReLU nets are almost optimal learners to tackle  data i.i.d. drawn according to $\rho\in \mathcal M(Lip^{(r,c_0)},\Xi_p)$. In the following theorem, we show that, {  if the network is deepened and widened}, there also exist  global minima of (\ref{target-optimization}) for over-parameterized deep ReLU nets possessing similar generalization performance.
 \begin{theorem}\label{Theorem:smooth-learning}
Let $p\geq 2$, $r,c_0>0$. {  If $L\succeq \log m$, $d_1\succeq m$ and $d_2,\dots,d_L\succeq \log m$, then
 there exist infinitely many $h\in\Psi_{d_1,\dots,d_L,m}$ } such that
\begin{equation}\label{good-global-smooth}
       C_1 m^{-\frac{2r}{2r+d}} \leq
      \sup_{f^*\in Lip^{(r,c_0)}_{\mathbb I^d},\rho_X\in\Xi_p} \mathbf E[\|h-f^*\|_{L_{\rho_X}^2}^2]
     \leq 2C_3
     \left(\frac{m}{\log m}\right)^{-\frac{2r}{2r+d}}.
\end{equation}
\end{theorem}

{  Due to Lemma \ref{Lemma:massiveness}, any $h\in\Psi_{d_1,\dots,d_L,m}$ is an
exact interpolant of $D$, i.e. $h(x_i)=y_i$, $i=1,\dots,m$. Therefore,} Theorem \ref{Theorem:smooth-learning} presents  theoretical verifications of benign over-fitting for deep ReLU nets, which  was intensively discussed recently \cite{Belkin2018,Belkin2019,Liang2020,Lin2020,Bartlett2020}. It should be mentioned that {  our main novelties are two folds. On   one hand, we are concerned with deep ReLU nets while the existing results focused on linear hypothesis spaces. On the other hand,  we provide   evidence of existence of good global minima without high-dimensionality and strong distribution assumptions while the existing results focused on searching strong conditions on the dimension of input spaces and data distributions to guarantee that all global minima have excellent generalization performances. }
{  Theorem \ref{Theorem:smooth-learning} shows  that implementing ERM on over-parameterized deep ReLU nets} can achieve the almost optimal generalization error {  rates}, but it does not demonstrate the power of depth since    shallow learning also reaches these bounds \cite{Caponnetto2007,Maiorov2006a}. To show the power of depth, we should impose more restrictions on regression functions. For this purpose, we introduce the generalized additive models  that are widely used in statistics and machine learning \cite{Gyorfi2002,Schmidt2020}. For $r,\gamma,c_0,c_0'>0$, we say that $f$ admits a generalized additive model
if
$
       f=h\left(\sum_{i=1}^df_i(x^{(i)})\right),
$
where $h\in Lip^{(r,c_0)}_{\mathbb R}$ and $f_{i}\in Lip^{(\gamma,c_0')}_{\mathbb I}$. Write $\mathcal W_{r,\gamma,c_0,c_0'}$ as the set
of all functions  {  admitting}  a generalized additive model. {  If $L\sim \log m$, $d_1\sim  m^{\frac1{2r+1}}$ and $d_2,\dots,d_L \sim \log m$, it can be found in \cite{Schmidt2020,Han2020} that for any $p\geq 2$, there holds
\begin{eqnarray}\label{additive-1}
       && C_4 \left(m^{-\frac{2r}{2r+1}}+m^{-\frac{2\gamma(r\wedge1)}{2\gamma(r\wedge1)+1}}\right) \leq
       e(\mathcal W_{r,\gamma,c_0,c_0'},\Xi_p) \nonumber\\
         &\leq& 
        \sup_{f^*\in \mathcal W_{r,\gamma,c_0,c_0'},\rho_X\in\Xi_p}
        \mathbf E[\|f_{global}^{under}-f^*\|_{L^2_{\rho_X}}^2]\nonumber\\
      &\leq&  C_5
     \left(m^{-\frac{2r}{2r+1}}+m^{-\frac{2\gamma(r\wedge1)}{2\gamma(r\wedge1)+1}}\right)\log^3m, 
\end{eqnarray}
where   $C_4$, $C_5$ are constants
 independent of $m$ and $f_{global}^{under}$ is an arbitrary global minimum of (\ref{target-optimization}). }
Noticing that shallow learning is difficult to achieve the above  generalization error {  rates}: even for a special case {  of the} generalized additive model with $f_i(x^{(i)})=(x^{(i)})^2$, it has been proved in \cite{Chui2019} that shallow nets with any activation functions cannot achieve the aforementioned generalization error {  rates}.    The following theorem presents the existence of perfect global minima of (\ref{target-optimization}) to show the power of depth in the over-parameterized setting.

\begin{theorem}\label{Theorem:additive-learning}
Let $p\geq 2$ and $r,\gamma,c_0,c_0'>0$. {  If  $L\succeq\log m$, $d_1\succeq m$, and $d_2,\dots,d_L\succeq\log m$, then there are infinitely many $h\in\Psi_{d_1,\dots,d_L,m}$    such that
\begin{eqnarray}\label{good-global-additive}
       &&C_4 \left(m^{-\frac{2r}{2r+1}}+m^{-\frac{2\gamma(r\wedge1)}{2\gamma(r\wedge1)+1}}\right)
       \leq
      \sup_{f^*\in \mathcal W_{r,\gamma,c_0,c_0'},\rho_X\in\Xi_p} \mathbf E[\|h-f^*\|_{L_{\rho_X}^2}^2] \nonumber\\
     &\leq&  2C_5
     \left(m^{-\frac{2r}{2r+1}}+m^{-\frac{2\gamma(r\wedge1)}{2\gamma(r\wedge1)+1}}\right)\log^3m.  
\end{eqnarray}
}
\end{theorem}

%Similar to Theorem \ref{Theorem:smooth-learning}, the depth of network presented in Theorem \ref{Theorem:additive-learning} can be reduced further by presenting some specific structure rather than DFCN. We omit this part for the sake of brevity.

Theorem \ref{Theorem:additive-learning} shows that there are infinitely many global minima of ERM on over-parameterized deep ReLU nets   theoretically  breakthroughing the bottleneck of shallow learning. This illustrates {  an} advantage of adopting over-parameterized deep ReLU nets to build up   hypothesis spaces in practice.

Finally, we show the power of depth in capturing the widely used spatially sparse features with the help of massive data.  {  It has been discussed in \cite{Chui2020,Liu2022} that spatial  sparseness  is an important data feature  for image and signal processing and deep ReLU nets perform excellently in reflecting the spatial sparseness.}
  Partition $\mathbb
I^d$ by $(N^*)^d$ sub-cubes $\{A_j\}_{j=1}^{(N^*)^d}$ of side length
$(N^*){-1}$ and with centers $\{\zeta_j\}_{j=1}^{(N^*)^d}$. For $u\in\mathbb
N$ with $u\leq (N^*)^d$, define
$$
    \Upsilon_u:=\left\{j_\ell:
      j_\ell\in\{1,2,\dots,(N^*)^d\}, 1\leq \ell\leq u\right\}.
$$
 If
 the support of $f\in L^p(\mathbb I^d)$ is contained in
$S:=\cup_{j\in \Upsilon_u}A_{  j}$ for a subset $\Upsilon_u$ of $\{1,2,\dots,(N^*)^d\}$ of cardinality at most $u$, we then say that $f$ is
$u$-sparse in $(N^*)^d$ partitions. Denote by  $Lip^{(N^*,u,r,c_0)}_{\mathbb I^d}$
the set of all    $f\in Lip^{(r,c_0)}_{\mathbb I^d}$ which are $s$-sparse in $(N^*)^d$
partitions. Let $L\sim \log m$, $d_1,d_2\sim
        \left(m\left(\frac{u}{(N^*)^d}\right)/\log m\right)^{1/(2r+d)}$, $d_3,\dots,d_L\sim\log m$.
{  If $m$ is large enough   to satisfy
$
 \frac{m}{\log m}\geq \tilde{C}_4\frac{(N^*)^\frac{2d+4r+2d}{(2r+d)p}}{u^\frac{1}{2r+d}},
$
then it can be easily deduced from \cite{Chui2020} that there exists a DSCN structure contained in $\mathcal N_{d_1,\dots,d_L}^{DFCN}$ such that
\begin{eqnarray}\label{sparse-1}
           &&C_6 m^{-\frac{2r}{2r+d}}\left(\frac{u}{(N^*)^d}\right)^{\frac{d}{2r+d}}
            \leq e(Lip^{(N^*,u,r,c_0)}_{\mathbb I^d},\Xi_p) \nonumber\\
            &\leq &
         \sup_{\rho\in\mathcal M(Lip^{((N^*)^d,u,r,c_0)},\Xi_p)} \mathbf E[\|f_{global}^{under}-f^*\|_{L^2_{\rho_X}}^2]\nonumber\\
            &\leq&
         C_7  \left(\frac{m}{\log m}\right)^{-\frac{2r}{2r+d}}  \left(\frac{u}{(N^*)^d}\right)^{\frac{2}{p}-\frac{2r}{2r+d}}, 
\end{eqnarray}
where $\tilde{C}_4, C_6,C_7$ are constants independent of $m$ and $f_{global}^{under}$ is an arbitrary global minimum of ERM on the DSCN. }   In (\ref{sparse-1}), $\left(\frac{m}{\log m}\right)^{-\frac{2r}{2r+d}}$ reflects the smoothness of $f^*$ and $\left(\frac{u}{(N^*)^d}\right)^{\frac{2}{p}-\frac{2r}{2r+d}}$ embodies the spatial sparseness of $f^*$.
{
As discussed above, given a sparsity level $u$ and the number of partitions $  (N^*)^d$, the size of data should satisfy
 $\frac{m}{\log m}\geq \tilde{C}_4\frac{(N^*)^\frac{2d+4r+2d}{(2r+d)p}}{u^\frac{1}{2r+d}}
$
to embody the spatially sparse feature of $f^*$. In particular, if the number of samples is smaller than the sparsity level $u$, it is impossible to develop learning schemes to realize the support of $f^*$.  Recalling the localized approximation property of deep ReLU nets \cite{Chui1994,Chui2020}, with the help of massive data, \eqref{sparse-1} shows that deep ReLU nets are capable of capturing the spatial sparseness, which is beyond the capability of shallow nets due to its lack of localized approximation \cite{Chui1994}.
We refer the readers to \cite{Chui2020} for more details about the above assertions.}
If $p=2$,  it can be found in (\ref{sparse-1}) that ERM on deep ReLU nets in the under-parameterized setting can achieve almost optimal generalization error rates.
The following theorem shows the existence of perfect   global minima of (\ref{target-optimization}) in the over-parameterized setting.

\begin{theorem}\label{Theorem:sparse-learning}
Let  $r,c_0>0$, $N^*\in\mathbb N$, $u\leq (N^*)^d$ and $m$ satisfy
$
 \frac{m}{\log m}  \succeq \frac{(N^*)^\frac{2d+4r+2d}{(2r+d)p}}{u^\frac{1}{2r+d}}.
$
{   If $L\succeq \log m$, $d_1,d_2\succeq m$ and $d_3,\dots,d_L\succeq\log m$,
then  there are are infinitely many $h\in\Psi_{d_1,\dots,d_L}$ such that
\begin{eqnarray}\label{good-global-sparse}
       &&C_6 m^{-\frac{2r}{2r+d}}\left(\frac{u}{(N^*)^d}\right)^{\frac{d}{2r+d}}
       \leq
     \sup_{\rho\in\mathcal M(Lip^{(N^*,u,r,c_0)},\Xi_2)} \mathbf E[\|h-f^*\|_{L^2_{\rho_X}}^2] \\
     & \leq&  2C_7
     \left(\frac{m}{\log m}\right)^{-\frac{2r}{2r+d}}\left(\frac{u}{(N^*)^d}\right)^{\frac{d}{2r+d}}.  
\end{eqnarray}
}
\end{theorem}

Theorem  \ref{Theorem:sparse-learning} shows that for spatially sparse regression functions, there are infinitely many global minima of (\ref{target-optimization}) in the over-parameterized setting achieving the almost optimal generalization error {  rates}. Besides the given three types of regression functions, we can provide similar results for numerous regression functions including the general composite functions \cite{Schmidt2020},  hierarchical interaction models \cite{Kohler2017} and piecewise smooth functions \cite{Imaizumi2018} by using the  same approach in this paper. {  In particular, to derive similar assertions as our theorems, it is sufficient to apply the proposed deepening scheme developed in Theorem \ref{Theorem:deepening} below on the corresponding generalization error estimates in \cite{Kohler2017,Imaizumi2018,Schmidt2020}. We remove the details for the sake of brevity.}

{  Based on the above theorems, we can derive the following corollary, which shows the versatility  of over-parameterized deep ReLU nets in regression.
\begin{corollary}\label{Corollary:versatility}
  Let $p\geq 2$ and $r,\gamma,c_0,c_0'>0$, $N\in\mathbb N$, $u\leq N^d$ and $m$ satisfy
$
 \frac{m}{\log m} \succeq \frac{N^\frac{2d+4r+2d}{(2r+d)p}}{u^\frac{1}{2r+d}}.
$
   If $L\succeq \log m$, $d_1,d_2\succeq m$ and $d_3,\dots,d_L\succeq\log m$,
then  there are are infinitely many $h\in\Psi_{d_1,\dots,d_L}$ such that \eqref{smooth-optimal-rate}, \eqref{good-global-additive}, and
\eqref{good-global-sparse}  hold simultaneously.
\end{corollary}

A crucial problem of deep learning is on how to specify the structure of deep nets for a {  given} learning task. It should be highlighted that in the under-parameterized setting, both the width and depth should be carefully tailored to avoid the bias-variance trade-off phenomenon, making the structures of deep nets for different learning tasks  quite different \cite{Kohler2017,Imaizumi2018,Schmidt2020,Chui2020,Han2020}. However, as shown in Corollary \ref{Corollary:versatility}, over-parameterizing succeeds in avoiding the structure selection problem of deep learning in the sense that there exist a unified over-parameterized structure of DFCN which contains perfect global minima for different learning tasks.
}

 %In fact, all existing results on the learning performance of deep  ReLU nets \cite{Imaizumi2018,Schmidt2020,Chui2020,Han2020} can be used to derive the existence of over-fitting resistent global minima of (\ref{target-optimization}).

\section{Related Work and Discussions}\label{Sec.Related-work}

In this section, we review some related work on the generalization performance of deep ReLU nets and {  make some comparisons}  to highlight our novelty.
From the classical bias-variance trade-off principle, {  the}  over-parameterized setting
makes the deep nets model so flexible that its global minima   suffer  from the well known
over-fitting phenomenon {  \cite{Cucker2007} in the sense that they fit} the training data perfectly but {  fail} to predict new query points. Surprisingly,   numerical evidences \cite{Zhang2016} showed  that {  the over-fitting may not occur.}  This interesting phenomenon leads to a rethinking of the modern machine-learning practice and bias-variance trade-off.

{  The interesting result  in \cite{Belkin2018} is the first work}, to the best of our knowledge, to theoretically  study the generalization performance of   interpolation methods. In \cite{Belkin2018},  multivariate triangulations are constructed to interpolate data   and the generalization error bounds are exhibited as a function with respect to the dimension $d$, which shows that the interpolation method possesses good generalization performance when $d$ is large. After imposing  certain  structure constraints on the covariance matrices, \cite{Hastie2019,Bartlett2020}  derived tight generalization error bounds for  over-parameterized liner regression. In \cite{Muthukumar2020}, the authors revealed  several quantitative relations between linear interpolants and the structures of   covariance matrices  and then provided  a hybrid interpolating scheme whose generalization error was rate-optimal for sparse liner model with noise. Motivated by these results, \cite{Liang2020} proved that kernel ridgeless least squares possess  a good generalization performance for high dimensional data, provided the distribution $\rho_X$ satisfies certain restrictions. In \cite{Lin2020},   the generalization error of  kernel ridgeless least squares was proved to be   bounded by means of some differences of kernel-based integral operators.

It should be mentioned that there are some strict restrictions concerning the dimensionality, structures of covariance matrices  and  marginal distributions $\rho_X$ to guarantee the good generalization performance of interpolation methods in the existing literature. Indeed, it was proved in \cite{Kohler2019} that some restriction on the marginal distribution $\rho_X$ is necessary, without which there is a $\rho_X^*$ such that  all interpolation methods may perform extremely badly (see Lemma \ref{Lemma:bad-generalization}). However, the high dimensional assumption and structure constrains of covariance matrices are removable, since   \cite{Belkin2018} have already constructed a piecewise interpolation based on the well known Nadaraya-Watson estimator and derived   optimal generalization error {  rates}, without any restrictions on the dimension  and covariance matrices.

Compared with   {  the above} mentioned  existing work, there are mainly three novelties of our results. First, we aim to explain the over-fitting resistance phenomenon for deep learning rather than {  linear algorithms} such as linear regression and kernel regression. Due to the nonlinear nature of (\ref{target-optimization}), we rigorously prove the existence of  bad minima and perfect global ones. Therefore,   it is almost impossible to derive the same  results as  linear models \cite{Belkin2019,Muthukumar2020,Liang2020} for deep ReLU nets  to determine which conditions are sufficient to guarantee the perfect generalization performance of all global minima. Furthermore, our theoretical results coincide with the experimental phenomenon in the sense that global minima (with training error to be 0) with different parameters  frequently   have  totally different behaviors  in generalization.
Then, {  our results present essential advantages of running ERM on over-parameterized deep ReLU nets by means of proving the existence of deep ReLU nets possessing  the almost optimal generalization error rates, which is beyond the capability of shallow learning.} Finally, our results are established with mild conditions on   distributions and {  without any restrictions on the dimension of the input space}.

Another related work is \cite{Mucke2019}, where the authors discussed the generalization performance of interpolation methods based on histograms and also established the existence of bad and good interpolation neural networks. The main arguments of \cite{Mucke2019} and our paper are similar: there are global minima of  ERM on deep ReLU nets that can avoid  over-fitting. The main differences are as follows: 1) It is well known that an approximant or learner based on histograms suffers from the well known saturation problem in the sense that the approximation or learning rate cannot be improved further once the regularity (or smoothness) of the regression function achieves  certain level \cite{Yarotsky2017}. In particular, as shown in \cite{Yarotsky2017,Chui2020}, deep ReLU nets with two hidden layers can provide localized approximation but is difficult to approximate extremely smooth functions.  In our paper, we avoid this saturation phenomenon via deepening the networks and thus {  break}  through the bottleneck of the analysis in \cite{Mucke2019}. 2) We {  provide  detailed structures} of deep ReLU nets   and {  derive} the quantitative requirement of  the number of free parameters to guarantee the existence of global minima of ERM on deep ReLU nets, which is different from \cite{Mucke2019}. 3) More importantly, we devote to answering {  Problem \ref{prob:1}} via showing the optimal generalization error {  rates} and  the power of depth of some global minima of  ERM on deep ReLU networks. In particular, using the deepening approach, we prove that there {   exist infinitely many global minima } of ERM on over-parameterized deep ReLU nets that perform almost the same as the under-parameterized deep ReLU nets.

In summary, we provide an affirmative answer to {  Problem \ref{prob:1}  via providing several examples for perfect global minima of (\ref{target-optimization}).}  {  It would be interesting to
study the distribution of these perfect global minima and design feasible schemes to find   them. We will keep in studying  this topic and consider these two more challenging problems.}

\section{Numerical Examples}\label{Sec.numerical}
{  In this section, we conduct numerical simulations to support our theoretical assertions on the existence of benign overfitting of running ERM on over-parameterized deep ReLU nets. There are mainly four purposes of our simulations. In the first simulation, we aim to show the  relation between the generalization performance of global minima of (\ref{target-optimization}) and the number of parameters (or width) of deep ReLU nets. In the second one, we devote to verifying the over-fitting resistance of (\ref{target-optimization}) via showing the relation between the generalization error  and the number of algorithmic iterations (epoches). In the third one, we show the existence of    good and bad global minima of (\ref{target-optimization}).
Finally, we compare our learned global minima with some widely used learning schemes to show the learning performance of   (\ref{target-optimization}) on over-parameterized deep ReLU nets.}

\begin{figure*}[htb]
	\centering
	\includegraphics[scale=0.3]{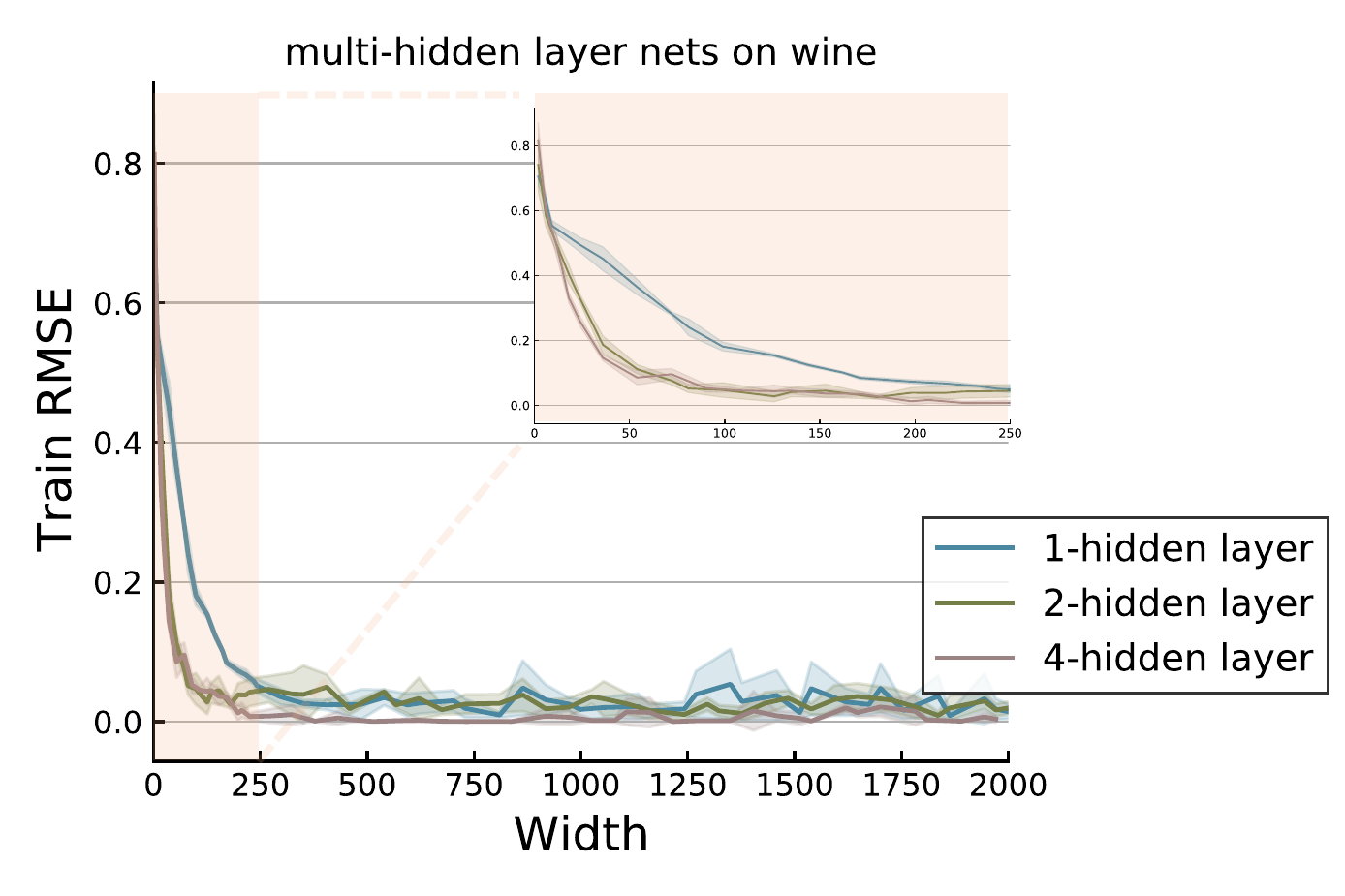}
	\includegraphics[scale=0.3]{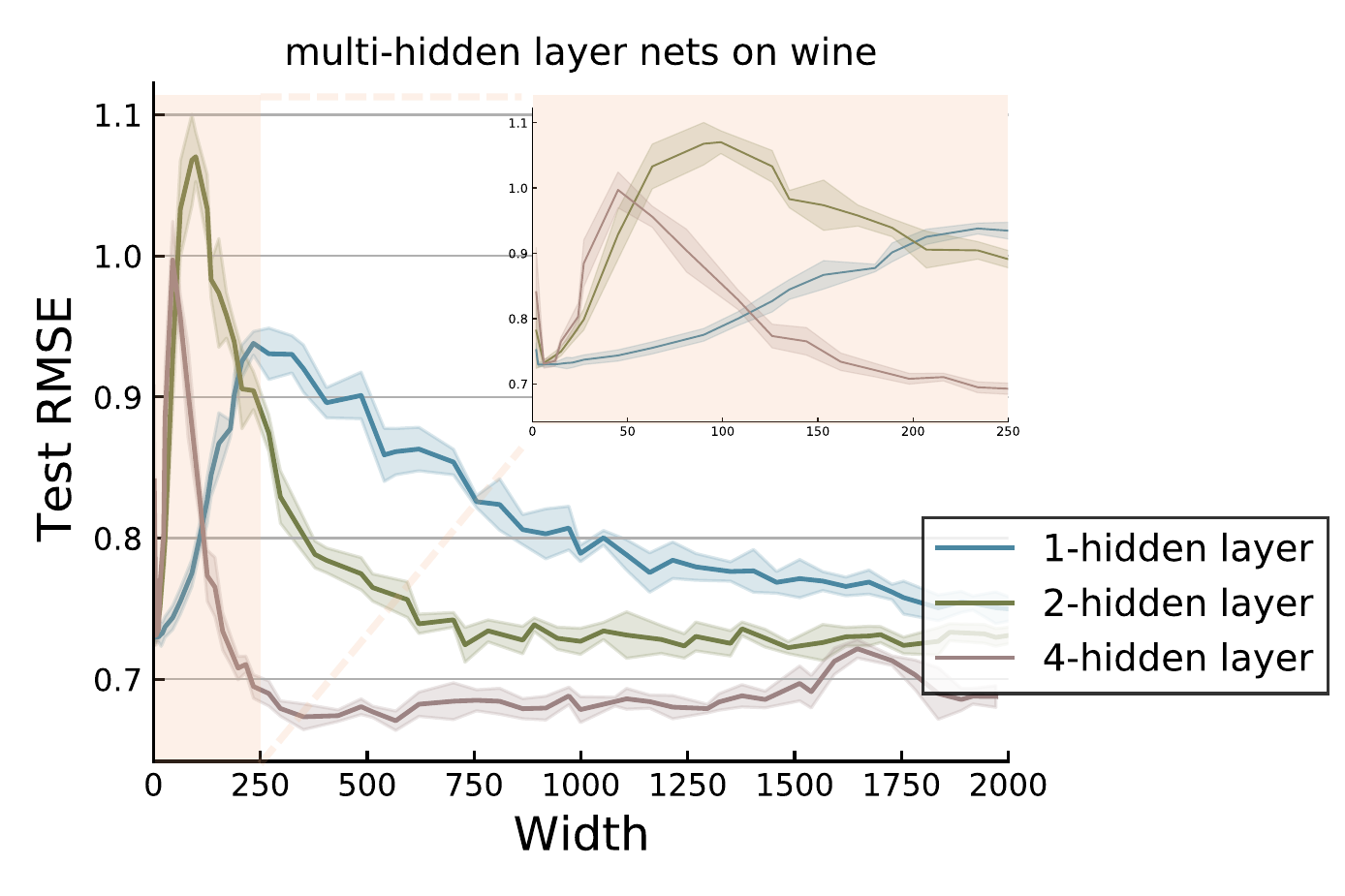}
	\includegraphics[scale=0.3]{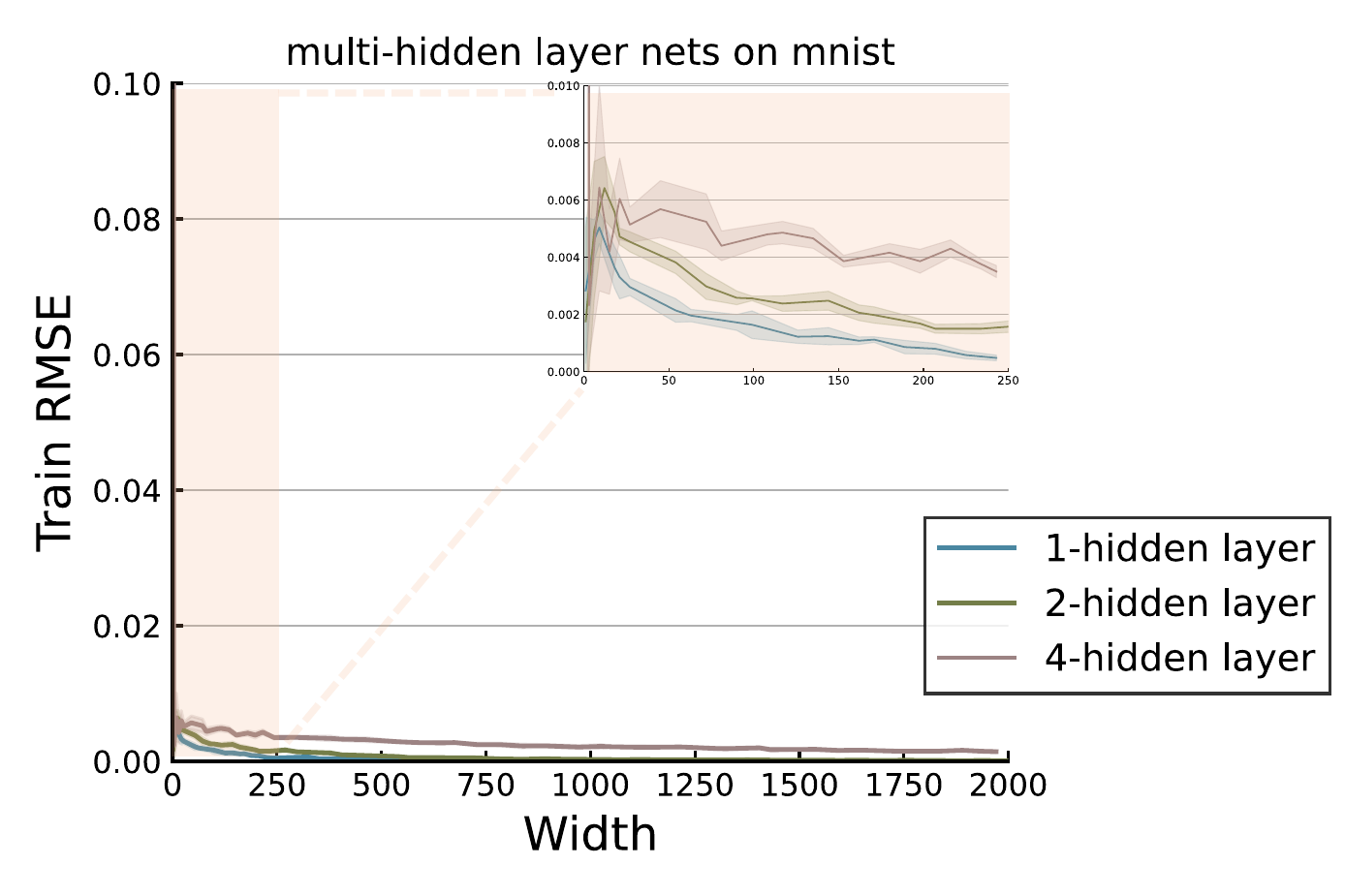}
	\includegraphics[scale=0.3]{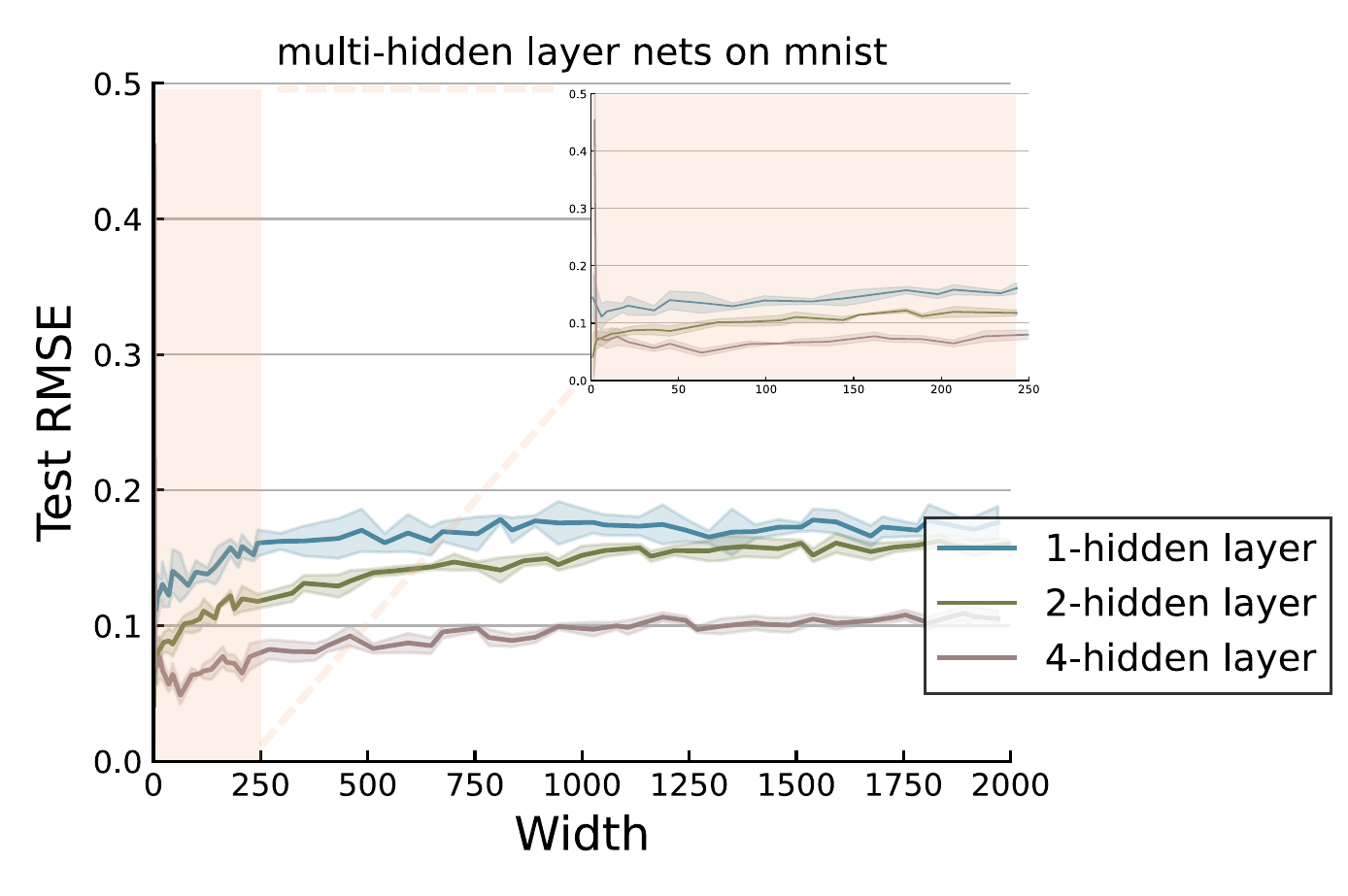}
	\caption{Relation between  training and testing errors and the width} \label{fig_1}
\end{figure*}

For these purposes, we adopt fully connected ReLU neural networks  with     $L$ hidden layers  and $k$ neurons paved on each layer. In all simulations, we set
$L\in \{1, 2, 4\}$ and $k\in\{1,\dots,2000\}.$ We use the well known Adam optimization algorithm on deep ReLU nets with step-size being constantly $0.001$ and initialization being the default PyTorch values. Without especial declaration, the training is  stopped after $50,000$ iterations.

we report our results on two real-world data numerical simulations on two datasets.
The first one is a Wine Quality dataset from UCI database. The Wine Quality dataset is related to red and white variants of the Portuguese ``Vinho Verde'' wine with $1599$ red and $4898$ white examples. We select white wine for experiments.
There are $12$ attributes in the data set:  fixed acidity, volatile acidity, citric acid, residual sugar, chlorides, free sulfur dioxide, total sulfur dioxide, density, pH, aulphates, alcohol and quality (score between $0$ and $10$). Therefore,  it can be viewed as a regression task on input data  of $11$ dimension. Regarding the preferences, each sample was evaluated by a minimum of three sensory assessors (using blind tastes), which graded the wine in a scale  ranging from $0$ (very bad) to $10$ (excellent). We sample $2/3$ data points as our training set and $1/3$ for testing.

 {The second dataset is MNIST. MNIST dataset is widely used in classification tasks. Here we follow \cite{Liang2020} to create a regression task using MNIST. MNIST inlcudes 70000 samples in total. Each sample includes a 28*28 dimensional feature and a target representing a digit ranging from 0 to 9. We randomly pick 291 samples with targeting digits equaling to 0 or 1, and separate 221 samples for training and 70 for testing. We label digit 0 as -1 and digit 1 as 1. Then we get a dataset with a $28\times28=784$ dimensional feature and a label -1 or 1. The regression task is then built. }

{\bf Simulation 1.}
In this simulation, we study the relation between the RMSE (rooted mean squared error) of test error and widths of deep ReLU nets with $L=1,2,4$.  {Our results are recorded via 5 independent single trials. The solid line is the mean value from these trials, and the shaded part indicates the deviation.} The numerical results are reported in Figure \ref{fig_1}.

Figure \ref{fig_1} presents the perfect global minima by exhibiting  the relation between training and testing RMSE and the number of free parameters. From Figure \ref{fig_1}, we  {  obtain} the following four observations: 1) The left figure shows that neural networks with more hidden layers are {  easier} to produce  exact interpolations of the training data. This coincides with the common consensus since more hidden layers with the same width involves much more free parameters; 2) For each depth, it can be found in the right figure that the testing curves exhibit an approximate doubling descent phenomenon as declared in \cite{Belkin2019} for linear models. It should be highlighted that such a phenomenon does not always exist for deep ReLU nets training and we only choose a good trend from several trails to demonstrate the existence of perfect global minima; 3) As the width (or capacity of the hypothesis space) increases, it can be found in the right figure that the testing error does not increase, exhibiting a totally different  phenomenon from the classical bias-variance trade-off. This shows that for over-parameterized deep ReLU nets, there  exist good global minima of (\ref{target-optimization}), provided the depth is appropriately selected; 4) It can be found that deeper ReLU nets  perform  better in generalization, which demonstrates the power of depth in tackling the Wine Quality data. All these verify our theoretical assertions in Section \ref{Sec.Noisy} and show that there {  exist} perfect global minima of (\ref{target-optimization}) to realize the power of deep ReLU nets.
%
%and Figure \label{fig_2}  demonstrate the training risk and test risk curves to reflect the relation via varying the width of three DFCN models with  different hidden layers. We
%
%
%
%\begin{figure}[htb]
%\centering
%\includegraphics[scale=0.46]{}
%\includegraphics[scale=0.46]{wine_250capcity_test_rmse.pdf}
%\caption{ Training and test risk curves (RMSE)  for the FCN models on Wine dataset varying the number of features. } \label{fig_2}
%\end{figure}
%
%

{\bf Simulation 2.} In this simulation, we devote to numerically studying the role of iterations (epoches) in (\ref{target-optimization}) in  both under-parameterized and over-parameterized settings.  {We run ERM on  DFCNs with depth 4 and width $2, 40, 2000$ on Wine dataset and MNIST dataset. Since the number of training data is 3265(221 in MNIST dataset, resp.), it is easy to check that deep ReLU nets with depth 4 and widths 2 and 40 are under-parameterized,  while those with depth 4 and width 2000 is over-parameterized. }The numerical results are reported in Figure \ref{fig_2}.

\begin{figure*}[htb]
	\centering
	\includegraphics[scale=0.3]{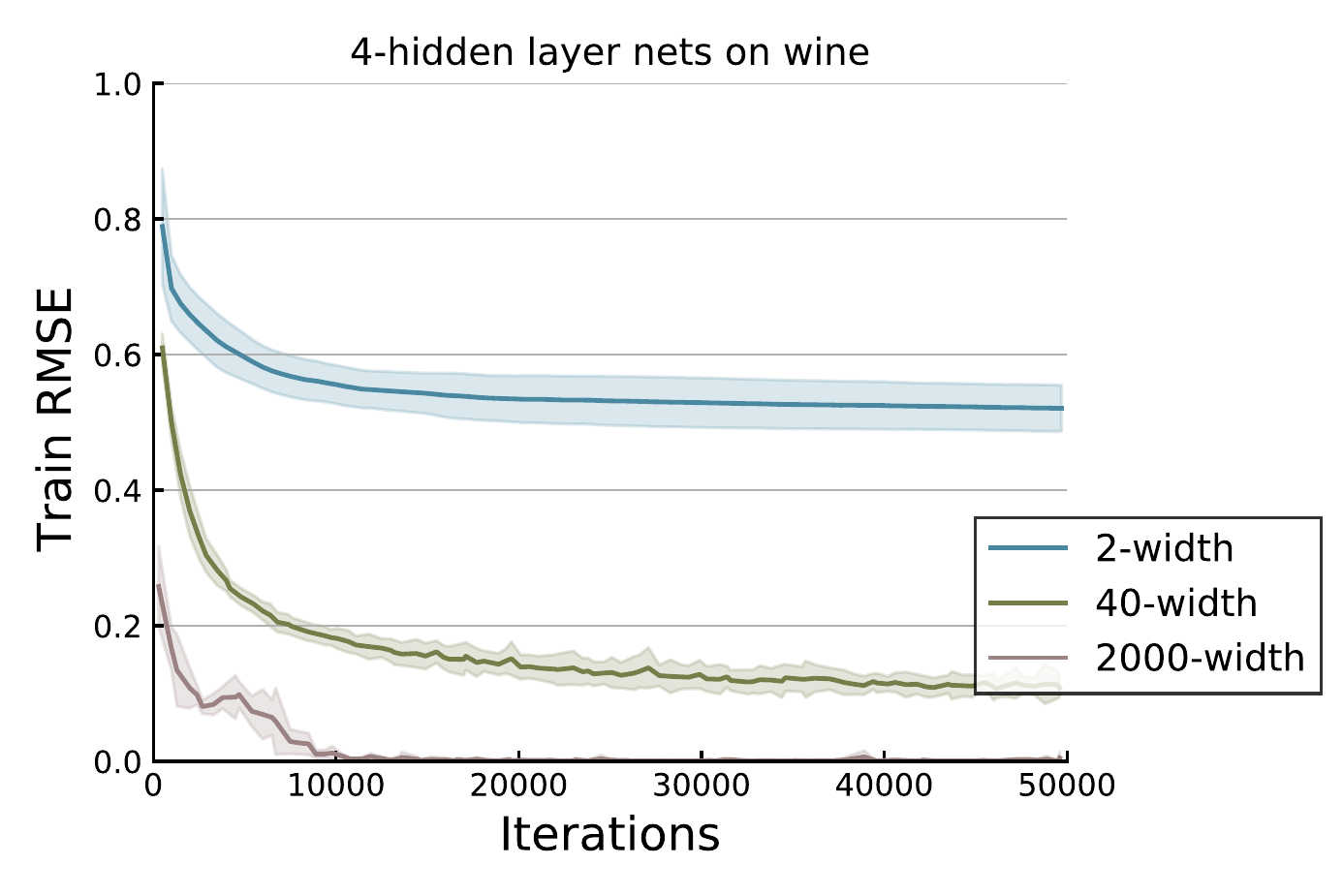}
	\includegraphics[scale=0.3]{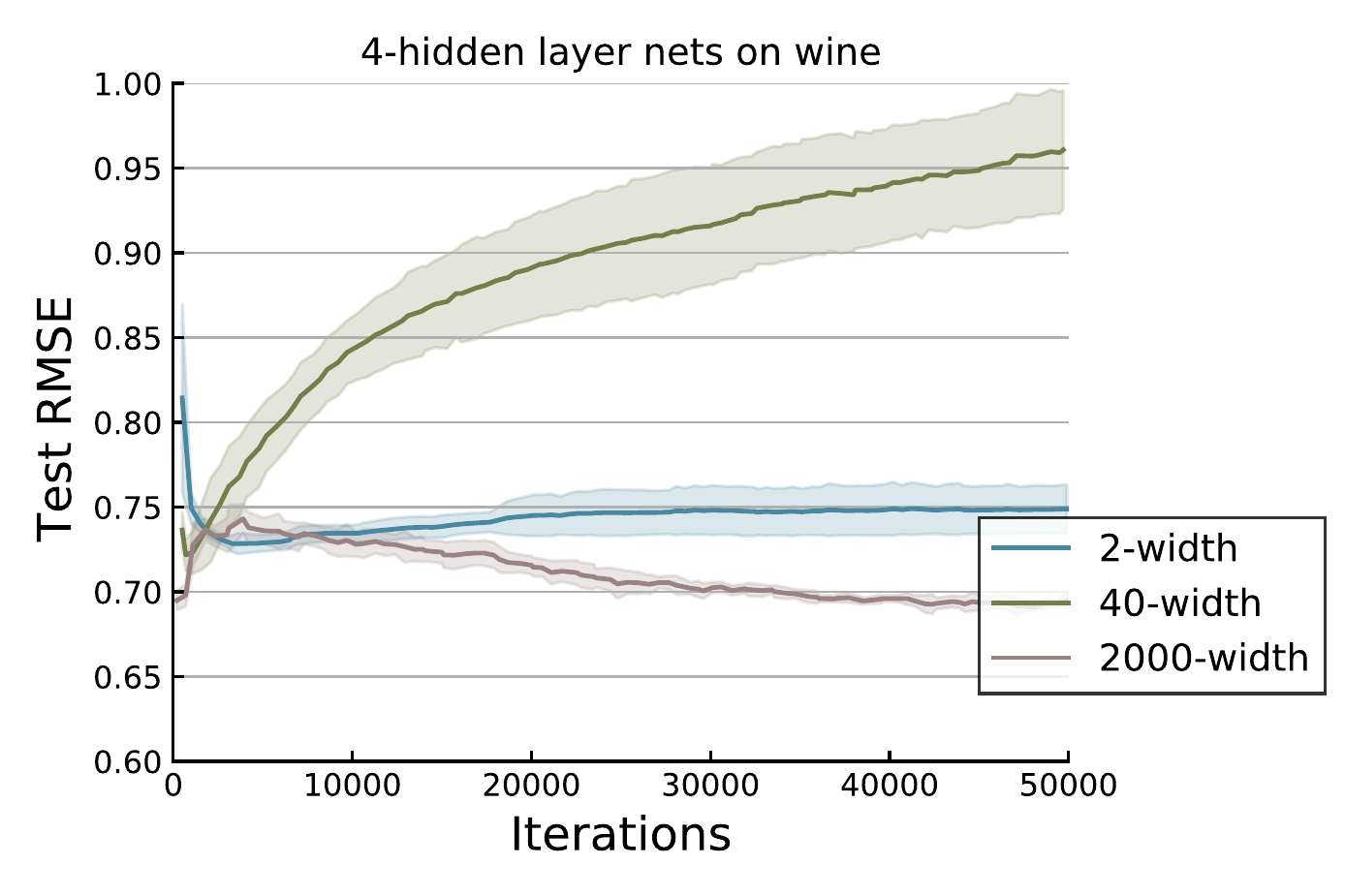}
	\includegraphics[scale=0.465]{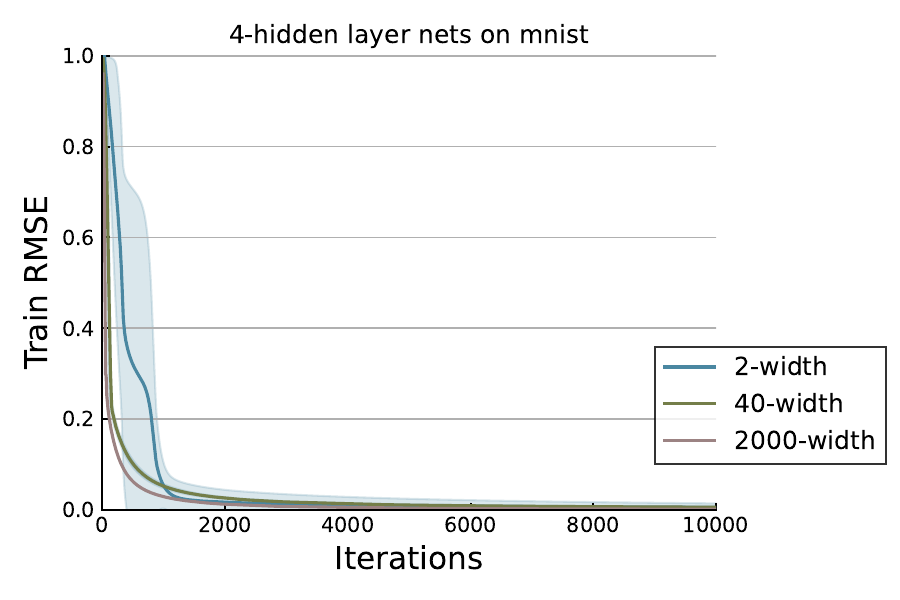}
	\includegraphics[scale=0.465]{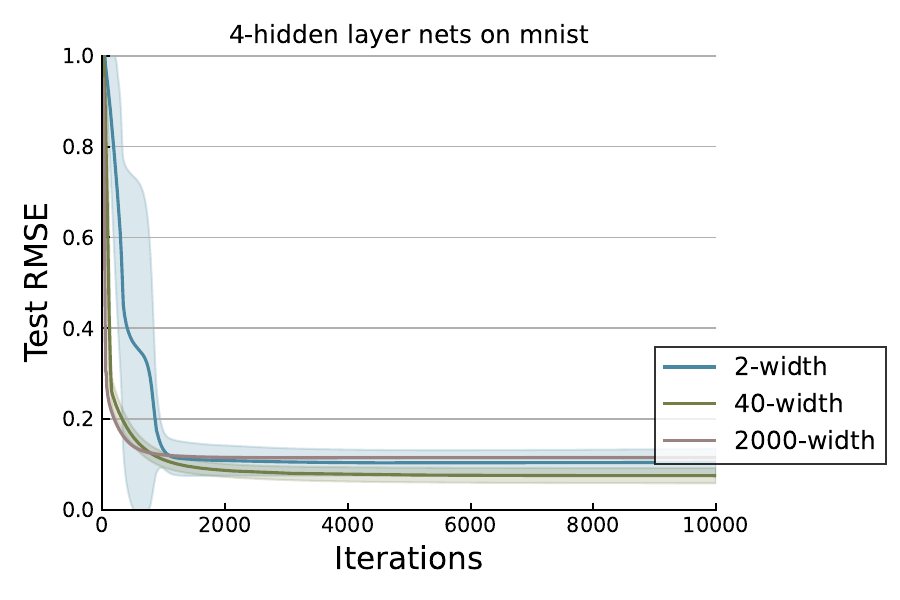}
	\caption{Relation between the training and testing errors and the number of iterations } \label{fig_2}
\end{figure*}

There are also three interesting findings exhibited in Figure \ref{fig_3}: 1) For under-parameterized ReLU nets, it is almost impossible to produce a global minimum acting as an exact interpolation of the data. However, for over-parameterized deep ReLU nets, running Adam with sufficiently many epoches attains a training error to be zero. Furthermore, after a specific value, the number of iterations does not affect the training error. This means that Adam converges to a global minimum of (\ref{target-optimization}) on over-parameterized deep ReLU nets; 2) The testing error for under-parameterized ReLU nets, exhibited in the right figure, behaves according to the classical bias-variance trade-off principle in the sense that the error firstly decreases with respect to the epoch and then increases after a specific value of epoches. Therefore, early-stopping is necessary to guarantee the good performance in this setting; 3) The testing error for over-parameterized ReLU nets is always non-increasing with respect to the epoch. This shows the over-fitting resistance of deep ReLU nets training and also verifies the existence of the perfect global minima of (\ref{target-optimization}) on over-parameterized deep ReLU nets. It should be highlighted the numerical {  result}  presented in Figure  \ref{fig_3} is also a single trial  selected from
numerous results, since we are concerned with the existence of perfect global minima. In fact, there are also numerous  examples for bad global minima of (\ref{target-optimization}).

%
%Figs. 4 and 5 show the show the training risk and test risk curves in terms of \textit{RMSE} for MNIST dataset, varying the capacity of 1-hidden layer and 2-hidden layer FCNs, respectively.
%
%
%
%\begin{figure}[htb]
%	\centering
%	\includegraphics[scale=0.465]{hidden1_mnist_train_rmse.pdf}
%	\includegraphics[scale=0.465]{hidden1_mnist_test_rmse.pdf}
%	\caption{Training and test risk curves for the 1-hidden layer FCN model on MNIST dataset varying the number of features.} \label{fig_4}
%\end{figure}
%
%\begin{figure}[htb]
%	\centering
%	\includegraphics[scale=0.465]{hidden2_mnist_train_rmse.pdf}
%	\includegraphics[scale=0.465]{hidden2_mnist_test_rmse.pdf}
%	\caption{Training and test risk curves  for the 2-hidden layer FCN model on MNIST dataset varying the number of features.} \label{fig_6}
%\end{figure}
%
%
%\subsection{Comparison with other regression methods}
{\bf Simulation 3}
 {In this simulation, we show that although there exist perfect global minima in over-parameterized settings, bad global minima can also be  found sometimes. We test the performance of deep ReLU nets with depth 4 and width 2000 on Wine dataset and MNIST dataset. It take different numbers of steps to converge to a good training performance on these two datasets. We trigger several runs with different learning rates and net parameter initializations, and pick good and bad global minima from two trials respectively. We report the numerical results in Figure\ref{fig_3}.}

\begin{figure*}[htb]
	\centering
	\includegraphics[scale=0.465]{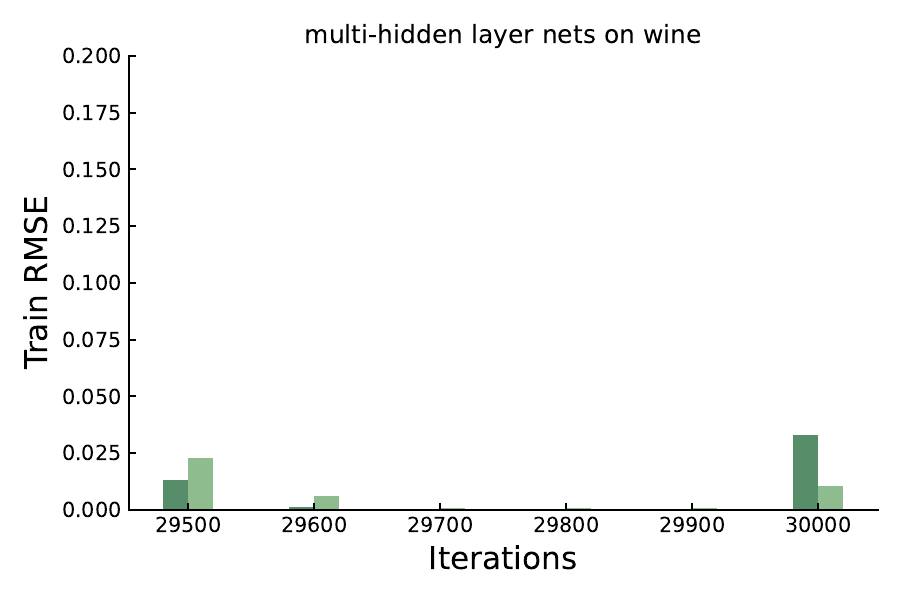}
	\includegraphics[scale=0.465]{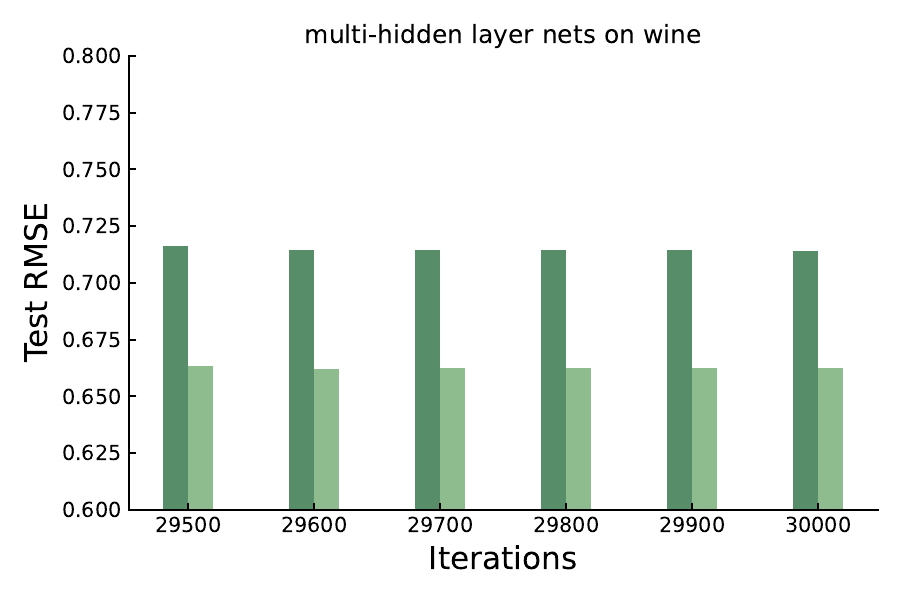}
	\includegraphics[scale=0.465]{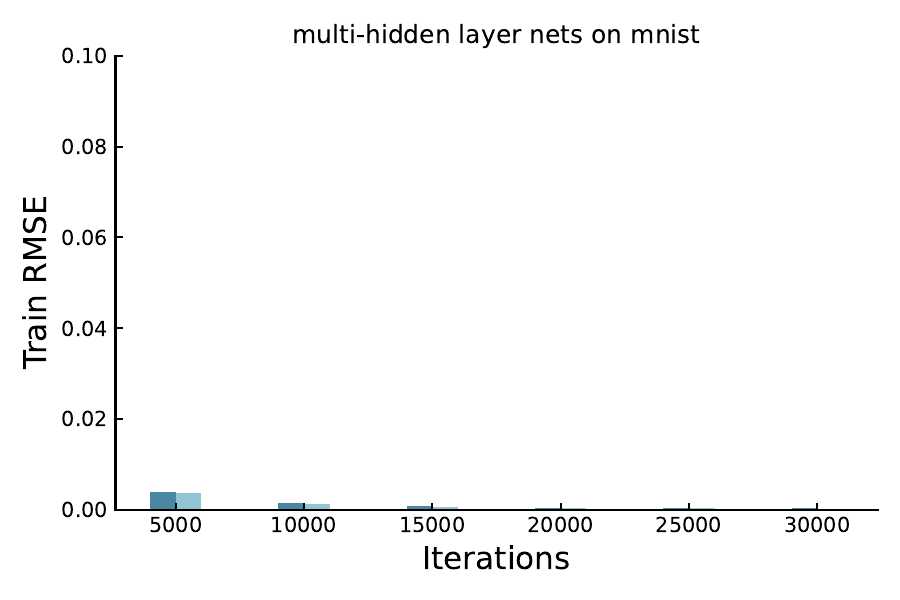}
	\includegraphics[scale=0.465]{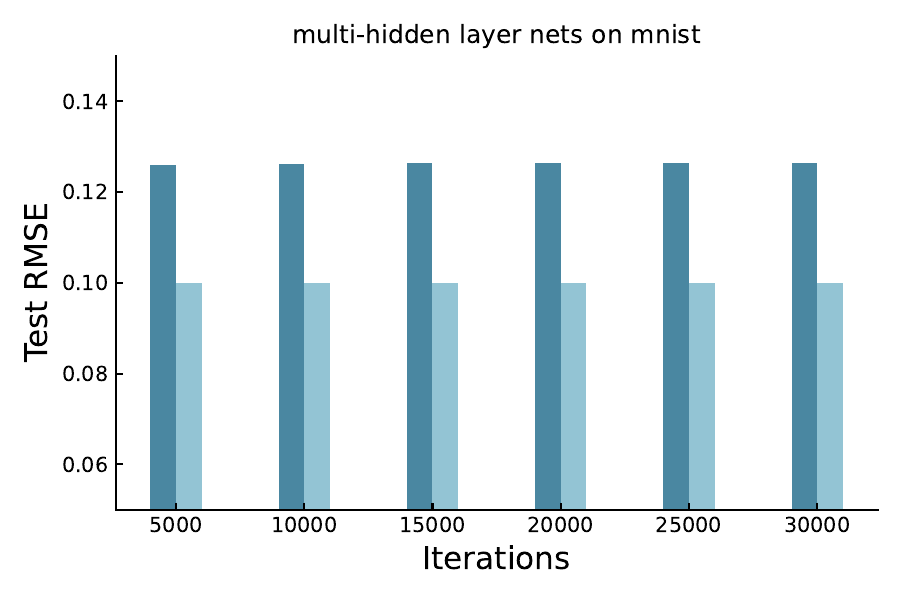}
	\caption{Comparison of good and bad global minima} \label{fig_3}
\end{figure*}

 {From Figure \ref{fig_3}, we find that  different global minima of Figure \ref{fig_3} perform totally differently in generalization,  though   the training loss both comes to 0. In particular, the testing errors of bad global minima can be much larger than those of good global minima. It should be mentioned that the bad interpolants in the above simulations are also derived from Adam. Therefore, their orders of testing errors are comparable with those of good interpolants. We highlight that this is due to the implementation of the ADAM algorithm rather than the model (\ref{target-optimization}). As far as the model is concerned, it can be shown in our next simulation that the orders of bad interpolants are also larger than those of good ones.}

{\bf Simulation 4}
 {In this simulation, we compare (\ref{target-optimization}) on over-parameterized deep ReLU nets with some standard learning algorithms including ridge regression (Ridge), support vector regression (SVR), kernel interpolation(KIR) and kernel ridge regression (KRR) to show that the numerical phenomenon exhibited in previous figures is not built upon sacrificing the generalization performance. For the sake of fairness,  we test various models under the same condition to our best efforts via tuning  hyper-parameters. In particular, we implement the referenced methods by using the standard  scikit-learn package. In the experiment with the wine dataset specifically, we use ridge regression with regularization parameter being $1$. In KRR, we use the Gaussian kernel with width  being $20$ and regularization parameter  being $0.0002$. KIR uses the Gaussian kernel with width being $5$ and regularization parameter being 0. SVR keeps the default sklearn hyper-parameters. In the experiment with the MNIST dataset, we change regularization parameter in ridge regression to 10 and keep other parameters fixed. In this simulation, we also use the deep ReLU nets with depth $4$ and width $2000$ and conduct 5 trials to record the average training and testing RMSE. }

 {
In addition,   we   construct a ReLU net that achieves 0 training error but performs extremely badly in the testing set to show that there really exist  extremely bad interpolants, just as Proposition \ref{Proposition:bad-interpolation} illustrated. We introduce some notations at first. Denote  $x_i=\left(x_i^{(1)}, \ldots, x_i^{(d)}\right)$. Define $T_{\tau, a, b}(t)=\frac{1}{\tau}\{\sigma(t-a+\tau)-\sigma(t-a)-\sigma(t-b)+\sigma(t-b-\tau)\}$, where $\sigma$ is the ReLU activation function and $\tau$ is a parameter which we  set to be small enough ($e^{-10}$ in this simulation). A feature input is expressed as $x=\left(x^{(1)}, \ldots, x^{(d)}\right)$.  We call the corresponding output of the constructed net (CN) as $N_{1, m, \tau}(x)$. Note that we drop duplicated samples when implementing CN. CN is constructed as bellow:
$$
N_{1, m, \tau}(x)=\sum_{i=1}^m y_i \sigma\left(\sum_{l=1}^d T_{\tau, x_i^{(l)}-\frac{1}{m^5}, x_i^{(l)}+\frac{1}{m^5}}\left(x^{(l)}\right)-(d-1)\right).
$$
More details of the construction of  $N_{1, m, \tau}$ can be found in the proof of Proposition \ref{Proposition:bad-interpolation}. We introduce $N_{1, m, \tau}$ in this simulation is to show that as an interpolant, $N_1$ performs quite poorly to show that there are extremely bad global minima of (\ref{target-optimization}). }
The numerical results are reported in  Table \ref{splitting}.

%$$
%\begin{array}{|c||c|c|c|}\hline \text { Dataset } & \begin{array}{c}\text { 2-hidden layer } \\ \text { FCN }\end{array} & \begin{array}{c}\text { Kernel Ridgeless} \\ \text { Regression}\end{array} &  \begin{array}{c}\text { Kernel Ridge } \\ \text { Regression }\end{array}  \\
%\hline \text { MNIST } & \textcolor{red}{94.1\%} &  \textcolor{red} {93.6\%}  & \textcolor{red} {<93.6\%}    \\
%\hline \text { Power Plant } & \textcolor{red} {0.241} &  \textcolor{red} {0.314} & \textcolor{red} {0.243} \\ %length_scale=11.0 2021-09-29-22-18-27
% %2021-06-13-12-29-33 reg=0.1
%\hline \text { Wine } & \textcolor{red}{0.822} &  \textcolor{red} {0.905}  & \textcolor{red} {0.854} \\  %¨¨¡¥????paper?1??????????????????¡ã???????¡ì????¨¦a???¡¥?¡¥1Kernel Ridge Regression¨¨¡ã???¡ã????????????????????????¨¨¡ì?????¡¥1ridgeless ??1?3????????13?????¡ã¨¦???¡¥1Ridgeless ¨¦????¡ã????????????¨¨¡ã???¡ã?????????wine: ridgeless ???.7???ridge???.0 ???powerplant: ridgeless ???.3,  ridge???.0???%\hline
%\end{array}
%$$
%\begin{center} Table II: Prediction performance of the FCN model and kernel regression model.
%\end{center}
\begin{table}[htb]	
	\centering
	\scriptsize
	\setlength\tabcolsep{4.7pt}
	\begin{tabular}{cl|c|c}
		\toprule
		\multicolumn{4}{c}{\textbf{Wine Quality Data}} \\
		\midrule
		&  Methods       &  Train RMSE & Test RMSE   \\
		\midrule
		& Ridge & 0.534 & 0.735 \\
		% & Kernel Ridgeless Regression      & 0.76 \\
		& kernel interpolation & 0.000  & 13.031  \\
		& KRR   & 0.668 & 0.706  \\
		& SVR  & 0.628  & 0.696  \\
		\midrule \midrule
		&  \textbf{4-hidden layer DFCN (good case)} & \textbf{0.000}  & \textbf{0.668}  \\
		&  \textbf{CN (bad case)} & \textbf{0.000}  & \textbf{5.931}  \\
		\bottomrule
	\end{tabular}
	\begin{tabular}{cl|c|c}
		\toprule
		\multicolumn{4}{c}{\textbf{MNIST Data}} \\
		\midrule
		&  Methods       &  Train RMSE & Test RMSE   \\
		\midrule
		& Ridge & 0.056 & 0.304 \\
		% & Kernel Ridgeless Regression      & 0.76 \\
		& kernel interpolation & 0.000 & 0.135  \\
		& KRR   & 0.031 & 0.140  \\
		& SVR  & 0.073  & 0.154  \\
		\midrule \midrule
		&  \textbf{4-hidden layer DFCN (good case)} & \textbf{0.000}  & \textbf{0.097}  \\
		&  \textbf{CN (bad case)} & \textbf{0.000}  & \textbf{1.000}  \\
		\bottomrule
	\end{tabular}
	\caption{Comparison with other regression methods}
	\label{splitting}
\end{table}

{
There are four interesting observation in Table \ref{splitting}: 1) Learning schemes such as SVR, KRR and Ridge perform  stablely, since for both high-dimensional applications and low-dimensional simulations. The main reason is that a regularization term is introduced to balance the bias and variance for these schemes. As a result, the training error of these schemes are always non-zero; 2) Kernel interpolation performs well in high dimensional applications but fails to generalize well in low dimensional simulations. The main reason is that if $d$ is large, then the separation radius  $q_\Lambda$ is large \cite{Liang2020,Lin2020}, which in turn implies that the condition number of the kernel matrix is relatively small, making the kernel interpolation perform  well. However, if $d$ is small, the condition number of the kernel matrix is usually extremely large, making the prediction instable; 3) There exist deep ReLU nets exactly interpolating  the training data, leading to zero training error, but possessing an excellent generalization capability in yield small testing error,  implying that the obtain estimator is  a benign over-fitter for the data. Furthermore, it is shown in the table that the testing error of  over-parameterized deep ReLU nets is the smallest, demonstrating the power of depth as declared in our theoretical assertions in Section \ref{Sec.Noisy}; 4) There also exist deep ReLU nets interpolating the data but performing extremely badly in generalization, for both high-dimensional applications and low dimensional simulations.  All these findings verify our theoretical assertions that there are good global minima for ERM on over-parameterized deep ReLU nets but not all global minima are good. }

\section{Proofs}\label{Sec.proof}
In this section, we aim at proving our results stated in Section \ref{Sec.Noiseless} and Section \ref{Sec.Noisy}. The main novelty of our proof is a deepening scheme that produces an over-parameterized deep ReLU net (student network) based on a specific under-parameterized one (teacher network) so that the student network exactly interpolates the training data and possesses almost the same generalization performance as the teacher network.

\subsection{Deepening scheme for ReLU nets}

%
%$D=\{x_i,y_i\}_{i=1}^m$ are assumed to be drawn identically and independently according to an unknown joint distribution $\rho:=\rho(x,y)=\rho_X(x)\rho(y|x)$
%with $\rho_X$  the marginal distribution and $\rho(y|x)$ the conditional distribution on ${\mathcal Z}:= {\mathcal X} \times {\mathcal Y}$.
%The performance of  an estimator $f_D$ is then measured by the generalization error $\mathcal E(f_D):=\int_{\mathcal Z}
%(f_{D,\ell,n,\tau}(x)-y)^2d\rho$. Since the regression function defined by $f_\rho(x):=\int_{\mathcal Y}yd\rho(y|x)$ minimizes the generalization
%error, the performance of $f_{D}$ can be reflected by
%\begin{equation}\label{equality}
%                     \mathcal E(f_{D})-\mathcal E(f_\rho)=\|f_{D}-f_\rho\|^2_\rho.
%\end{equation}

Given a teacher network $g$, the deepening scheme devotes to deepening    and widening it to produce a student network  $f$ that exactly interpolates the given data $D$ and possesses almost the same generalization performance as $g$.  The following theorem presents the deepening scheme in our analysis.

\begin{theorem}\label{Theorem:deepening}
Let {  $g_{n,L,U}$ be any deep ReLU nets with    $L$ layers, $n$ free parameters  and width not larger than $U\in\mathbb N$ satisfying $\|g_{n,L,U}\|_{L^\infty(\mathbb I^d)}\leq C^*$ for some $C^*>0$. If $\rho_X\in \Xi_p$ with $p\in[2,\infty)$,
then for any $\varepsilon>0$,
there exist infinitely many DFCNs $f_{D,n,L,U,g}$ of depth $\mathcal O(L+\log\varepsilon^{-1})$ and width $ \mathcal O(m+U+\log\varepsilon^{-1})$ such that
\begin{equation}\label{interpolation-1}
   f_{D,n,L,U,g}(x_i)=y_i, \qquad \ \forall i=1,\dots,m,
\end{equation}
and
\begin{equation}\label{interpolation-2}
       \|f_{D,n,L,U,g}- g_{n,L}\|_{L_{\rho_X}^2}\leq\varepsilon,
\end{equation}
where $\tilde{C}$ is a constant depending only on $d$. }
\end{theorem}

The deepening scheme developed in Theorem \ref{Theorem:deepening} implies that
all deep ReLU nets that have been verified to possess good generalization {  performances}  in the under-parameterized setting \cite{Imaizumi2018,Schmidt2020,Lin2020,Han2020} can be deepened to  corresponding  deep ReLU nets in the over-parameterized setting such that the deepened networks  exactly interpolate the given data and possess good generalization error bounds.

%The index $\theta$ in Theorem \ref{Theorem:deepening} reflect the balance of depth and number of free parameters. If $\theta$ is small, we need more depth but less free parameters to deepen the teacher network to make it be an exact interpolant. Conversely, larger $\theta$ requires smaller depth while more free parameters.

The main tools for the
proof of  Theorem \ref{Theorem:deepening} are the localized approximation property of deep ReLU nets developed in \cite{Chui2020} and the product gate property of deep ReLU nets proved in {  \cite{Yarotsky2017}}. Let us  introduce the first tool as follows.
  For $a,b\in\mathbb R$ with $ a<b $, define a trapezoid-shaped
function $T  _{\tau,a,b}$ with a parameter $0<\tau\leq 1$ as
\begin{eqnarray}\label{trapezoid function}
   T_{\tau,a,b}(t)&:=&\frac1\tau\big\{\sigma(t-a+\tau)-\sigma(t-a)\nonumber\\
    &-&
   \sigma(t-b)+\sigma(t-b-\tau)\big\}.
\end{eqnarray}
We   consider
\begin{eqnarray}\label{Def.N1}
   { \mathcal N}_{a,b,\tau}(x) := \sigma\left(\sum_{j=1}^dT_{\tau,a,b}(x^{(j)})-(d-1)\right).
\end{eqnarray}
The following lemma  proved in \cite{Chui2020} presents the localized approximation
property of ${ \mathcal N}_{a,b,\tau}$.

\begin{lemma}\label{Lemma:Local approximation}
Let  $ a<b$, $0<\tau\leq 1$ and ${  \mathcal N}_{a,b,\tau}$ be defined by (\ref{Def.N1}).
Then we have $0\leq {\mathcal N}_{a,b,\tau}(x)\leq 1$ for all $x\in\mathbb I^d$
and
\begin{equation}\label{Localized approximation}
     {\mathcal N}_{a,b,\tau}(x)=\left\{\begin{array}{cc}
     0,&\mbox{if}\ x\notin[a-\tau,b+\tau]^d,\\
     1,&\mbox{if}\ x\in [a,b]^d.
     \end{array}
     \right.
\end{equation}
\end{lemma}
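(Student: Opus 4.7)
The plan is to reduce the claim to a direct case analysis on the one-dimensional building block $T_{\tau,a,b}$ and then lift the conclusion to $d$ dimensions through the "threshold" gate $\sigma(\,\cdot\,-(d-1))$.

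First I would analyze $T_{\tau,a,b}(t)$ by examining the five regions determined by the breakpoints $a-\tau, a, b, b+\tau$ of the four ReLU kernels. A short computation shows that on $(-\infty,a-\tau]$ and $[b+\tau,\infty)$ all four ReLU terms cancel to give $T_{\tau,a,b}(t)=0$; on $[a-\tau,a]$ only the first term is active and produces the linear ramp $(t-a+\tau)/\tau$ rising from $0$ to $1$; on $[a,b]$ the first two terms cancel to $\tau/\tau=1$; and on $[b,b+\tau]$ the three active terms combine to the linear ramp $(\tau-(t-b))/\tau$ descending from $1$ to $0$. This establishes the trapezoid shape together with $0\le T_{\tau,a,b}(t)\le 1$ on $\mathbb R$, the identity $T_{\tau,a,b}(t)=1$ on $[a,b]$, and $T_{\tau,a,b}(t)=0$ outside $[a-\tau,b+\tau]$.

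Next I would lift these three facts coordinate-wise. If $x\in[a,b]^d$ then each $T_{\tau,a,b}(x^{(j)})=1$, so the inner sum equals $d$ and $\mathcal N_{a,b,\tau}(x)=\sigma(d-(d-1))=\sigma(1)=1$. If $x\notin[a-\tau,b+\tau]^d$ then some coordinate $x^{(j_0)}$ lies outside $[a-\tau,b+\tau]$, forcing $T_{\tau,a,b}(x^{(j_0)})=0$; since the remaining $d-1$ summands are each at most $1$, the inner sum is $\le d-1$ and the argument of the outer ReLU is $\le 0$, giving $\mathcal N_{a,b,\tau}(x)=0$. For the global bound, the inner sum lies in $[0,d]$ for every $x\in\mathbb I^d$, so the argument of $\sigma$ lies in $[-(d-1),1]$ and hence $\mathcal N_{a,b,\tau}(x)\in[0,1]$.

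This is a straightforward verification rather than a proof with a genuine obstacle; the only mildly delicate point is organizing the one-dimensional case analysis cleanly so that the cancellations among the four ReLU terms are unambiguous at the breakpoints. I would handle this by listing for each of the five intervals exactly which of $\sigma(t-a+\tau),\sigma(t-a),\sigma(t-b),\sigma(t-b-\tau)$ contribute their affine form, and then sum. Once the trapezoid shape is in hand, the $d$-dimensional conclusion follows from a single application of $\sigma$ with threshold $d-1$, which acts as an "AND" gate that activates iff every coordinate lies in the supporting interval.
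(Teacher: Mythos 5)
Your verification is correct and complete: the five-interval case analysis of $T_{\tau,a,b}$ establishes the trapezoid shape (value $1$ on $[a,b]$, support in $[a-\tau,b+\tau]$, range in $[0,1]$), and the lift through $\sigma(\sum_j T_{\tau,a,b}(x^{(j)})-(d-1))$ as an AND-gate is exactly the intended mechanism. The paper itself gives no proof of this lemma, citing it from the reference on spatial sparseness of deep ReLU nets, and your argument is the standard one underlying that construction, so there is nothing to flag.
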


The second tool, as shown in the following lemma, presents  the     product-gate property of deep ReLU nets \cite{Yarotsky2017}.
\begin{lemma}\label{lemma:product-gate-2.2}
 For any $\ell\in\{2,3,\dots,\}$ and $\nu\in
(0,1)$, there exists a  DFCN with ReLU activation functions $\tilde{\times}_{\ell,\nu}:\mathbb R^\ell\rightarrow\mathbb R$
with $\mathcal O\left(\ell\log\frac1\varepsilon\right)$ depth, $\mathcal O\left(\ell\log\frac1\varepsilon\right)$  width,  and {  free parameters bounded by $\mathcal O(\ell^\beta\nu^{-\beta})$ for some $\beta>0$}
 such that
$$
       |u_1u_2\cdots u_\ell-\tilde{\times}_{\ell,\nu}(u_1,\dots,u_\ell)|\leq
       \nu,\qquad \forall u_1,\dots,u_\ell\in[-1,1]
$$
and
$$
   \tilde{\times}_{\ell,\nu}(u_1,\dots,u_\ell)=0,\qquad \mbox{if}\quad u_j=0 \quad \mbox{for some}\quad j=1,\dots,\ell.
$$
\end{lemma}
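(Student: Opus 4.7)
The plan is the classical sawtooth construction. First I build a DFCN approximator $\widetilde{\mathrm{sq}}_n$ to $x\mapsto x^2$ on $[-1,1]$ of depth $O(n)$ whose error decays like $2^{-2n}$; I then combine two copies of $\widetilde{\mathrm{sq}}_n$ through the polarization identity to obtain a bivariate product gate $\widetilde{\times}_2$; finally I cascade $\ell-1$ copies of $\widetilde{\times}_2$ to produce $\widetilde{\times}_\ell$. Concretely, let $g(x):=2\sigma(x)-4\sigma(x-\tfrac12)+2\sigma(x-1)$ be the tent map on $[0,1]$ and $g_k$ its $k$-fold self-composition; then $\mathrm{sq}_n(x):=x-\sum_{k=1}^n 4^{-k}g_k(x)$ is the piecewise-linear interpolant of $x^2$ at the dyadic nodes $j/2^n$ and satisfies $|\mathrm{sq}_n(x)-x^2|\leq 2^{-2n-2}$ on $[0,1]$. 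Writing $|x|=\sigma(x)+\sigma(-x)$ in the input layer and feeding the result into $\mathrm{sq}_n$ produces an exactly even approximator $\widetilde{\mathrm{sq}}_n(x):=\mathrm{sq}_n(|x|)$ on $[-1,1]$ with depth $n+2$ and $O(n)$ parameters.

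Next, the polarization identity $uv=\bigl(\tfrac{u+v}{2}\bigr)^2-\bigl(\tfrac{u-v}{2}\bigr)^2$ suggests setting
\[ \widetilde{\times}_2(u,v):=\widetilde{\mathrm{sq}}_n\!\Bigl(\tfrac{u+v}{2}\Bigr)-\widetilde{\mathrm{sq}}_n\!\Bigl(\tfrac{u-v}{2}\Bigr), \]
which satisfies $|\widetilde{\times}_2(u,v)-uv|\leq 2^{-2n-1}$ on $[-1,1]^2$ and, by evenness of $\widetilde{\mathrm{sq}}_n$, sends either coordinate $0$ to output $0$ exactly. Now define $p_2:=\widetilde{\times}_2(u_1,u_2)$ and, for $k=3,\dots,\ell$, $p_k:=\widetilde{\times}_2(\mathrm{cl}(p_{k-1}),u_k)$, where $\mathrm{cl}(z):=\sigma(z+1)-\sigma(z-1)-1$ is a one-layer clip onto $[-1,1]$; put $\widetilde{\times}_\ell:=p_\ell$. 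Choosing $n=\lceil\log_2(2\ell/\nu)\rceil$ makes each gate accurate to $\nu/\ell$, and a telescoping induction using $|\mathrm{cl}(a)-\mathrm{cl}(b)|\leq|a-b|$, $|u_k|\leq 1$, and $\mathrm{cl}(u_1\cdots u_{k-1})=u_1\cdots u_{k-1}$ yields $|p_\ell-u_1\cdots u_\ell|\leq\nu$. The total depth and parameter count are $O(\ell\log(1/\nu))$, and zero-preservation propagates along the cascade because each gate annihilates $0$ in either slot exactly.

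Two aspects require care. First, the DFCN requirement forces me to thread identity channels $z\mapsto\sigma(z)-\sigma(-z)$ through every hidden layer so that unused coordinates survive and consecutive cascade stages can be wired end-to-end; this inflates constants but not the order of depth and width. Second, and more delicate, the zero-preserving clause must hold \emph{exactly}, not merely to within $\nu$. This is why Step 1 symmetrizes via $\mathrm{sq}_n(|\cdot|)$ before polarizing, and why each stage of the cascade inherits the property verbatim from its building blocks. Establishing this exact cancellation is the key technical point; the remaining work is routine bookkeeping on depth, width, and error accumulation.
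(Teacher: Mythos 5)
Your proposal is correct and follows essentially the same route as the paper, which does not prove this lemma itself but quotes it from the cited Yarotsky (2017): your sawtooth approximant $\mathrm{sq}_n$ of $x^2$, the polarization-based product gate made exactly zero-annihilating through the even symmetrization $\mathrm{sq}_n(|\cdot|)$, and the clipped cascade for the $\ell$-fold product are precisely that cited construction, with the two genuinely delicate points (the clip $\mathrm{cl}$ keeping intermediate products inside the validity region $[-1,1]^2$, and exact rather than approximate zero-preservation) correctly identified and handled. The only cosmetic slippage is bookkeeping: your cascade has depth $O(\ell\log(\ell/\nu))$ and, once identity channels are threaded, somewhat more parameters than the stated $O(\ell\log\frac1\nu)$, which is immaterial since the paper invokes the lemma only for fixed $\ell$ (namely $\ell=2$ and $\ell=d+s$).
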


With the above tools, we can prove Theorem \ref{Theorem:deepening} as follows.

\begin{proof}[Proof of Theorem \ref{Theorem:deepening}]
Let ${\mathcal N}_\tau={\mathcal N}_{-\tau,\tau,\tau/2}$ be given in Lemma \ref{Lemma:Local approximation} and $\tilde{\times}_{2,\nu}:\mathbb R^2\rightarrow\mathbb R$  in Lemma \ref{lemma:product-gate-2.2} with $\ell=2$. Then
it follows from \eqref{Localized approximation} that
\begin{equation}\label{proof.int-1}
      {\mathcal N}_{ \tau}(x-x_i)=\left\{\begin{array}{cc}
     0,&\mbox{if}\ x\notin x_i+[-3\tau/2,3\tau/2]^d,\\
     1,&\mbox{if}\ x\in x_i+[-\tau,\tau]^d.
     \end{array}
     \right.
\end{equation}
  Since $\|g_{n,L,U}\|_{L^\infty(\mathbb I^d)}\leq C^*$, we can
define a function $\mathcal N_{\tau,\nu,D,g}$ on $\mathbb R^d$ by
\begin{eqnarray}\label{good-deep-inter-noiseless}
    \mathcal N_{\tau,\nu,D,g}(x)&:=&\sum_{i=1}^m y_i  {\mathcal N}_{\tau}(x-x_i) \nonumber\\
   &+&C^*\tilde{\times}_{2,\nu}\left( \frac{g_{n,L,U}(x)}{C^*},  1-\sum_{i=1}^m {\mathcal N}_{\tau}(x-x_i)\right).
\end{eqnarray}
If $\tau<\frac{2q_\Lambda}{3\sqrt{d}}$, then  for any $j\neq i$, we have  from (\ref{proof.int-1}) that
$
      {\mathcal N}_{ \tau}(x_j-x_i)=0.
$
       Noting further ${\mathcal N}_{\tau}(x_i-x_i)=1$, we have for any $j=\{1,\dots,m\}$ that $\sum_{i=1}^m   {\mathcal N}_{\tau}(x_j-x_i)=1$ and
\begin{equation}\label{good-deep-inter-noiseless}
   \mathcal N_{\tau,\nu,D,g}(x_j)=\sum_{i=1}^m y_i  {\mathcal N}_{\tau}(x_j-x_i)=y_j.
\end{equation}
Moreover, for $i\neq j$ and any $x\in\mathbb R^d$,
$$
    \|x_i-x-(x_j-x)\|_2=\|x_i-x_j\|_2\geq \frac{2q_{\Lambda}}{\sqrt{d}}>3\tau
$$
implies $\mathcal N_\tau(x-x_j)=0.$ Hence $1-\sum_{i=1}^m\mathcal N_\tau(x-x_i)\in[0,1]$.
Therefore, Lemma \ref{lemma:product-gate-2.2} yields $\tilde{\times}_{2,\nu}\left( \frac{g_{n,L,U}(x_j)}{C^*},  1-\sum_{i=1}^m {\mathcal N}_{\tau}(x_j-x_i)\right)=0$. This implies
\begin{equation}\label{constr-interpolation}
     \mathcal N_{\tau, \nu,D,g}(x_j)=y_j,\qquad j=1,\dots,m.
\end{equation}
Define further a function $h_D$ on $\mathbb R^d$ by
$$
       h_D(x):=\sum_{i=1}^m y_i  {\mathcal N}_{\tau}(x-x_i)
   + g_{n,L,U}(x)\left(1-\sum_{i=1}^m {\mathcal N}_{\tau}(x-x_i)\right).
$$
It follows from Lemma \ref{lemma:product-gate-2.2} that
\begin{equation}\label{proof.int-2}
    |h_D(x)-\mathcal N_{\tau,\nu,D,g}(x)|\leq \nu,\qquad\ \forall x\in\mathbb I^d.
\end{equation}
If $x-x_i\notin[-3\tau/2,3\tau/2]^d$ for all $i=1,\dots,m$, then it follows from (\ref{proof.int-1}) that $\sum_{i=1}^m{\mathcal N}_{\tau}(x-x_i)=0$, which implies
$h_D(x)=g_{n,L}(x)$.
Hence,
\begin{eqnarray*}
     &&\|g_{n,L,U}-h_D\|_{L^p(\mathbb I^d)}^p
     =\int_{\mathbb I^d}|g_{n,L,U}(x)-h_D(x)|^pdx\\
     &\leq&\sum_{i=1}^m\int_{[x_i-3\tau/2,x_i+3\tau/2]^d}|g_{n,L,U}(x)-h_D(x)|^pdx
     \leq m(3\tau)^d 2^p(C^*)^p.
\end{eqnarray*}
This implies
$$
    \|g_{n,L,U}-h_D\|_{L^p(\mathbb I^d)}\leq 2C^*3^{d/p}m^{1/p}\tau^{d/p}.
$$
The above estimate together with (\ref{proof.int-2}) yields
\begin{eqnarray*}
     &&\|h_D-\mathcal N_{\tau, \nu,D,g}\|_{L^p(\mathbb I^d)}
     \leq
     \|h_D-\mathcal N_{\tau, \nu,D,g}\|_{L^p(\mathbb I^d)}+
      \|g_{n,L,U}-h_D\|_{L^p(\mathbb I^d)}\\
      &\leq&
      2^{d/p}\nu+2C^*3^{d/p}m^{1/p}\tau^{d/p}.
\end{eqnarray*}
Set $\nu = \varepsilon$ and $\tau\leq \min\{2q_\Lambda/(3\sqrt{d}),m^{-1/d}\varepsilon^{p/d}\}$. We obtain
\begin{equation}\label{error-relation}
    \|h_D-\mathcal N_{\tau,\nu,D,g}\|_{L^p(\mathbb I^d)}\leq C'\varepsilon,
\end{equation}
where $C':=2^{d/p}+2C^*3^{d/p}$. Denote by $\mathcal N^*(t)=\sigma(t)-\sigma(-t)=t$.
Recalling (\ref{good-deep-inter-noiseless}), we can define
\begin{eqnarray*}
     f_{D,n,L,U,g}&:=&\sum_{i=1}^my_i \overbrace{\mathcal N^*(\cdots\mathcal N^*}^{\mathcal O(L+\log \varepsilon^{-1})}(   {\mathcal N}_{\tau}(x-x_i) ))\\
   &+& C^*\tilde{\times}_{2,\nu}\left( \mathcal N^*\left(\frac{g_{n,L,U}(x)}{C^*}\right),  1-\sum_{i=1}^m {\mathcal N}_{\tau}(x-x_i)\right)
\end{eqnarray*}
with $\tau$ and $\nu$ as above so that the two items on the righthand side of $f_{D,n,g}$ have the same depth.
% If $L\geq 2$, we can define
%\begin{eqnarray*}
%   f_{D,n,\theta,\tilde{L},g}
%   &:=&
%   \sum_{i=1}^my_i \overbrace{\mathcal N^*(\cdots\mathcal N^*}^{4\tilde{L}+14+L}((y_i  {\mathcal N}_{\tau}(x-x_i))))\\
%   &+&
%   C^*\tilde{\times}_{2,\theta,\nu}\left(\frac{g_{n,L}(x)}{C^*}, 1-
%   \sum_{i=1}^my_i \overbrace{\mathcal N^*(\cdots\mathcal N^*}^{\tilde{L}}(   {\mathcal N}_{\tau}(x-x_i) ))
%   \right).
%\end{eqnarray*}
Then $f_{D,n,L,U,g}$ is a DFCN of   depth $\mathcal O(L+\log\varepsilon^{-1})$ and width $ \mathcal O(m+U+\log\varepsilon^{-1})$.
Noting further $\rho_X\in \Xi_p$, we then have
$\|f\|_{L^2_\rho(\mathbb I^d)}\leq D_{\rho_X}\|f\|_{L^p(\mathbb I^d)}$. This together with (\ref{error-relation}) yields
$$
    \|h_D-f_{D,n,L,U,g}\|_{L^2_{\rho_X} (\mathbb I^d)} \leq C'\varepsilon.
$$
Recalling that there are infinitely many $\tau$ satisfying  $\tau\leq \min\{2q_\Lambda/(3\sqrt{d}),m^{-1/d}\varepsilon^{p/d}\}$, then there are infinitely many such $f_{D,n,L,U,g}$.
Theorem \ref{Theorem:deepening}  is then proved  by scaling.
\end{proof}

\subsection{Proofs}
In this part, we prove our main results by using the proposed deepening scheme (for results concerning noisy data) and a functional analysis approach developed in \cite{Narcowich2004} (for results concerning noiseless data).
%
%Regarding the global minima of \eqref{target-optimization}, the above two theorems demonstrate that for sufficient large $n$ and $L$, there are numerous global minima of \eqref{target-optimization} possessing perfect generalization performances. This mathematically reveals the   existence  of over-fitting resistent  deep ReLU nets and the reason why double descent phenomenon exists for deep learning.

%
%
%

Firstly, we prove Proposition \ref{Proposition:bad-interpolation}
based on Lemma \ref{Lemma:Local approximation}.

\begin{proof}[Proof of Proposition \ref{Proposition:bad-interpolation}]
 If $y_i=0$, $i=1,\dots,m$, we can set $f_{D,L}(x)=0$. Then our conclusion
 naturally holds. Otherwise, we
define
\begin{equation}\label{bad-interpolation-nn}
   \mathcal N_{\tau,D}(x):=\sum_{i=1}^m y_i  {\mathcal N}_{-\tau,\tau,\tau/2}(x-x_i).
\end{equation}
If $\tau<\frac{2q_\Lambda}{3\sqrt{d}}$, then it follows from \eqref{good-deep-inter-noiseless} that
 $\mathcal N_{\tau,D}(x_i)=y_i$.
Since $\|f^*\|_{L^p(\mathbb I^d)}\geq c$, a direct computation yields
$$
    \|f^*-\mathcal N_{\tau,D}\|_{L^p(\mathbb I^d)}\geq\|f^*\|_{L^p(\mathbb I^d)}-\|\mathcal N_{\tau,D}\|_{L^p(\mathbb I^d)}\geq c-\|\mathcal N_{\tau,D}\|_{L^p(\mathbb I^d)}.
$$
But (\ref{proof.int-1}) together with (\ref{noiseless-setting}) and ${\mathcal N}_{ \tau}(x-x_i)\leq 1$ for any $x\in\mathbb I^d$ yields
\begin{eqnarray*}
    &&\|\mathcal N_{\tau,D}\|_{L^p(\mathbb I^d)}
    \leq  \sum_{i=1}^m \left(\int_{\mathbb I^d}  \left|f^*(x_i) {\mathcal N}_{-\tau,\tau,\tau/2}(x-x_i)\right|^pdx\right)^{1/p}\\
    &\leq& \sum_{i=1}^m |f^*(x_i)|\left(\int_{\mathbb I^d}  |{\mathcal N}_{-\tau,\tau,\tau/2}(x-x_i)|dx\right)^{1/p}\\
    &\leq&
    \sum_{i=1}^m |f^*(x_i)| \left(\int_{x:\|x-x_i\|_2\leq\frac{2\tau}{3}}dx\right)^{1/p}=
    \sum_{i=1}^m |f^*(x_i)| \left(\frac{3\tau}2\right)^{d/p}.
\end{eqnarray*}
{  Therefore, for
\begin{equation}\label{tau-selection}
    \tau<\min\left\{\frac{2q_\Lambda}{3\sqrt{d}}, \frac23\left(\frac{c}2\right)^{p/d}\left(\sum_{i=1}^m|y_i|\right)^{-p/d}\right\},
\end{equation}
 we have
$\|\mathcal N_{\tau,D}\|_{L_p(\mathbb I^d)}\leq c/2$, which yields
$$
    \|f^*-\mathcal N_{\tau,D}\|_{L_p(\mathbb I^d)}\geq c-c/2=c/2.
$$
Note further that $\mathcal N_{\tau,D}$ is a DFCN with 2 hidden layers with $d_1=4dm$ and $d_2=m$. Since $t=\sigma(t)-\sigma(-t)$, we can define $f_{D,d_1,d_2,\dots,d_L}$ iteratively by $f_{D,d_1,d_2}(x)$ satisfying $d_1\geq 4dm$, $d_2\geq m$ and
$$
     f_{D,d_1,d_2,\dots,d_{\ell+1}}= \sigma(f_{D,d_1,d_2,\dots,d_{\ell}})-\sigma(-f_{D,d_1,d_2,\dots,d_{\ell}}).
$$
Then $ {f_{D,d_1,d_2,\dots,d_L}}\in\Psi_{d_1,d_2,\dots,d_L,m}$.  Recalling the construction in \eqref{bad-interpolation-nn}, different $\tau$ corresponds to different neural networks and the above results hold hold for all $\tau$ satisfying \eqref{tau-selection}. Therefore, there are infinitely many deep ReLU nets formed as \eqref{bad-interpolation-nn}.
This completes the proof of Proposition \ref{Proposition:bad-interpolation}.}
\end{proof}

In the following, we construct some real-valued functions to feed $\tilde{\times}_{\ell,\nu}$ and derive a deep-net-based linear space  possessing good approximation properties.
For $t\in\mathbb R$, define
\begin{equation}\label{def.psi}
   \psi(t)=\sigma(t+2)-\sigma(t+1)-\sigma(t-1)+\sigma(t-2).
\end{equation}
Then
\begin{equation}\label{property psi}
     \psi(t)=\left\{\begin{array}{cc} 1,& \mbox{if}\ |t|\leq 1,\\
                                  0,&\mbox{if}\ |t|\geq 2,\\
                                  2-|t|,& \mbox{if}\  1<|t|<2.
                                  \end{array}\right.
\end{equation}
For $N\in\mathbb N$, $\alpha=(\alpha^{(1)},\dots,\alpha^{(d)})\in\mathbb N_0^d$, $|\alpha|=\alpha^{(1)}+\dots+\alpha^{(d)}\leq s$  and $\mathbf
j=(j_1,\dots,j_d)\in\{0,1,\dots,N\}^d,$  define
\begin{eqnarray}\label{set}
   \Phi_{N,\nu,s}&:=&\mbox{span}\left\{\tilde{\times}_{d+s,\nu} (\psi_{1,{\bf j}},\dots,
  \psi_{d,{\bf j}},\overbrace{x^{(1)}, \dots, x^{(1)}}^{\alpha^{(1)}},\right.\nonumber\\
  &&\left.\dots,\overbrace{x^{(d)},\dots, x^{(d)}}^{\alpha^{(d)}},\overbrace{1,\dots,1}^{s-|\alpha|})\right\},
\end{eqnarray}
where
\begin{equation}\label{def-psi-k}
    \psi_{k,{\bf j}}(x)=
    \psi\left(3N\left(x^{(k)}-\frac{j_k}N\right)\right).
\end{equation}
It is easy to see that for arbitrarily fixed $N,\nu,s$, $\Phi_{N,\theta,\nu,s}$ is a linear space of dimension at most  $d(N+1)^d\left(^{s+d}_{\ d}\right)$. {  Each element in $\Phi_{N,\nu,s}$ is a DFCN with depth
     $\mathcal O((s+d)\log\nu^{-1})$, $d_1= \mathcal O\left(d(N+1)^d\left(^{s+d}_{\ d}\right)\right)$ and $d_\ell=\mathcal O(\log\nu^{-1}).$} The approximation capability of the constructed linear space was deduced
 in \cite[Theorem 2]{Han2020} or \cite{Yarotsky2017}.

\begin{lemma}\label{Lemma:jackson}
Let  $\nu\in(0,1)$ and  $s,N\in\mathbb N_0$.
If $\nu=N^{-r-d}$ and
   $f\in Lip^{(r,c_0)}_{\mathbb I^d}$ with $0<r\leq s+1$, then there holds
\begin{equation}\label{Jackson11}
       \min_{h\in \Phi_{N,\nu,s}}\|f-h\|_{L^\infty(\mathbb I^d)}\leq C_1'c_0N^{-r/d},
\end{equation}
where $C_1' $ is a constant  depending only on $d$ and $r$.
\end{lemma}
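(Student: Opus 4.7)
The plan is to build an explicit approximant $h\in\Phi_{N,\theta,\nu,s}$ by combining three ingredients: a partition of unity produced by the trapezoidal bumps $\psi_{k,\mathbf j}$, a local Taylor expansion of $f$ of degree $s$ about each grid point $\mathbf j/N$, and the product-gate surrogate $\tilde\times_{d+s,\theta,\nu}$ from Lemma \ref{lemma:product-gate-2}. The role of $\tilde\times$ is precisely to fuse a product of ``one bump per coordinate'' with ``a monomial in $x$'' into a single element of the constructed hypothesis space at a controllable cost.

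First, I would verify that $\{\psi_{k,j}\}_{j}$ forms a partition of unity in each coordinate. From the explicit piecewise-linear form of $\psi$ in (\ref{property psi}), each shifted bump $\psi(3N(\cdot-j/N))$ is a trapezoid with flat top of length $2/(3N)$, support of length $4/(3N)$, and ramps of slopes $\pm 3N$. A direct case analysis shows two neighbouring trapezoids sum to $1$ on their overlap strip of length $1/(3N)$, while all other bumps vanish there. Tensoring yields $\Psi_{\mathbf j}(x):=\prod_{k=1}^d\psi_{k,\mathbf j}(x)$, a partition of unity on $\mathbb I^d$, with $\Psi_{\mathbf j}$ supported on a cube of diameter $O(1/N)$ around $\mathbf j/N$ and with at most $2^d$ of the $\Psi_{\mathbf j}$'s nonzero at any fixed $x$.

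Next, let $T_{\mathbf j}$ be the degree-$s$ Taylor polynomial of $f$ about $\mathbf j/N$. Since $f\in Lip^{(r,c_0)}_{\mathbb I^d}$ with $r=s+\mu$, the standard remainder estimate gives $|f(x)-T_{\mathbf j}(x)|\leq C c_0\|x-\mathbf j/N\|_2^r$, which is $O(c_0 N^{-r})$ on the support of $\Psi_{\mathbf j}$. Summing against the partition of unity yields $\bigl|f(x)-\sum_{\mathbf j}\Psi_{\mathbf j}(x)T_{\mathbf j}(x)\bigr|\leq Cc_0 N^{-r}$. Expanding $(x-\mathbf j/N)^\alpha$ via the binomial theorem rewrites each $\Psi_{\mathbf j}T_{\mathbf j}$ as a finite linear combination of exact products of the form $\psi_{1,\mathbf j}(x)\cdots\psi_{d,\mathbf j}(x)\cdot\prod_k(x^{(k)})^{\beta^{(k)}}$ with $|\beta|\leq s$, where the coefficients are uniformly bounded in $N$ (the factors $(\mathbf j/N)^{|\alpha|-|\beta|}$ lie in $\mathbb I^d$ and the Taylor coefficients are bounded on the compact $\mathbb I^d$). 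Padding with copies of $1$ yields exactly $d+s$ factors, all in $[-1,1]$, matching the argument list expected by $\tilde\times_{d+s,\theta,\nu}$.

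Replacing each such exact product by its $\tilde\times_{d+s,\theta,\nu}$ surrogate places the candidate $h$ inside $\Phi_{N,\theta,\nu,s}$. The main obstacle is controlling the accumulated surrogate error across the $O(N^d)$ grid points, which naively gives an unusable $O(\nu N^d)$ bound. The saving comes from the ``zero output if any input is zero'' property in Lemma \ref{lemma:product-gate-2}: whenever $\psi_{k,\mathbf j}(x)=0$ the corresponding $\tilde\times$ output is exactly $0$, so at each $x$ only the $O(2^d)$ surrogates with $\Psi_{\mathbf j}(x)>0$ contribute. Hence the pointwise surrogate error is $O(\nu)$. With the choice $\nu=N^{-r-d}$, this surrogate error is absorbed into the Taylor error $O(c_0N^{-r})$, giving the overall bound $\|f-h\|_{L^\infty(\mathbb I^d)}\leq C_1'c_0N^{-r/d}$ with a constant depending only on $d$ and $r$.
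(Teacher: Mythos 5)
The paper does not actually prove this lemma: it is quoted from \cite[Theorem 2]{Han2020}, so there is no internal argument to compare against. Your reconstruction is the standard proof for spaces of this type and is essentially sound: the shifted trapezoids do form a partition of unity in each coordinate (the two ramps of slope $\pm 3N$ sum to one on each overlap strip of width $1/(3N)$), blending degree-$\lfloor r\rfloor$ Taylor polynomials against the tensorized partition gives the $O(c_0N^{-r})$ bias, the binomial expansion lands you exactly on the generators of $\Phi_{N,\theta,\nu,s}$, and the ``output is zero whenever an input is zero'' clause of Lemma \ref{lemma:product-gate-2} is indeed the mechanism that makes the surrogate error pointwise $O(\nu)$. (Two quibbles on that last step: with $\nu=N^{-r-d}$ even the naive union bound over all $O(N^d)$ generators already gives $O(N^d\nu)=O(N^{-r})$, so the localization is a refinement rather than the only escape; and your surrogate error is really $O\bigl(\nu\max_{\mathbf j,\beta}|c_{\mathbf j,\beta}|\bigr)$, where the coefficients involve $D^\alpha f(\mathbf j/N)$ for $|\alpha|\leq s$. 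The definition \eqref{lip} controls only the Lipschitz seminorm of the top-order derivatives, so ``bounded on the compact $\mathbb I^d$'' yields a bound depending on $\|f\|_{C^s}$ rather than on $c_0$ alone; this gap is inherited from the lemma statement itself, which tacitly assumes the full H\"older norm is $\lesssim c_0$, and is not something your construction introduces.) Your argument in fact delivers the stronger rate $N^{-r}$, which implies the stated $N^{-r/d}$.
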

To prove Theorem \ref{Theorem:interpolation-app-linear}, we need the following lemma proved in \cite{Narcowich2004}. It should be mentioned that for DFCN with larger depth and width,  the above assertions obviously hold. We use the following functional analysis tool that presents a close relation interpolation and approximation to minimize the depth.

\begin{lemma}\label{Lemma:Banach}
  Let $\mathcal U$ be a (possibly complex) Banach
space, $\mathcal V$ a subspace of $\mathcal U$, and $W^*$ a
finite-dimensional subspace of $\mathcal U^*$, the dual of $\mathcal
U$. If for every $w^*\in W^*$ and some $\gamma>1$, $\gamma$
independent of $w^*$,
$$
\|w^*\|_{\mathcal U^*}\leq\gamma\|w^*|_{\mathcal V}\|_{\mathcal
V^*},
$$
then for any $u\in\mathcal U$ there exists $v\in\mathcal V$ such that
$v$ interpolates $u$ on $W^*$; that is, $w^*(u)=w^*(v)$ for all
$w^*\in W^*$. In addition, $v$ approximates $u$ in the sense that
$\|u-v\|_{\mathcal U}\leq(1+2\gamma)\mbox{dist}_{_{\mathcal
U}}(u,\mathcal V)$.
\end{lemma}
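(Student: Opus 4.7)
The plan is to prove the lemma in two stages: first establish the existence of an element of $\mathcal V$ that interpolates $u$ on $W^*$, and then show that among such interpolants one can be chosen with the claimed approximation bound.

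For existence, fix a basis $\{w_1^*,\dots,w_n^*\}$ of $W^*$ and consider the evaluation operator $E:\mathcal V\to\mathbb C^n$ given by $E(v)=(w_1^*(v),\dots,w_n^*(v))$. I would argue that $E$ is surjective by contradiction: if its range were a proper subspace, finite-dimensional Hahn--Banach would supply a nonzero tuple $(\alpha_1,\dots,\alpha_n)$ with $\sum_i\alpha_iw_i^*(v)=0$ for all $v\in\mathcal V$, so the nonzero functional $w^*:=\sum_i\alpha_iw_i^*\in W^*$ would satisfy $w^*|_\mathcal V\equiv 0$; the hypothesis then forces $\|w^*\|_{\mathcal U^*}\le\gamma\cdot 0=0$, a contradiction. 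Surjectivity of $E$ immediately yields, for each $u\in\mathcal U$, a vector $v\in\mathcal V$ with $E(v)=(w_1^*(u),\dots,w_n^*(u))$, which is exactly the required interpolation.

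For approximation, fix a small $\eta>0$ and pick $v'\in\mathcal V$ with $\|u-v'\|_\mathcal U\le \mathrm{dist}_\mathcal U(u,\mathcal V)+\eta$. Apply the existence stage to $u-v'$ to obtain interpolants of $u-v'$ on $W^*$ in $\mathcal V$; among them I would take $\tilde v$ whose norm is close to the quotient infimum
\begin{equation*}
q:=\inf\bigl\{\|w\|_\mathcal U:w\in\mathcal V,\ E(w)=E(u-v')\bigr\}.
\end{equation*}
The key computation is the duality identity
\begin{equation*}
q=\sup\bigl\{|w^*(u-v')|:w^*\in W^*,\ \|w^*|_\mathcal V\|_{\mathcal V^*}\le 1\bigr\},
\end{equation*}
obtained by identifying $(\mathcal V/\ker E)^*$ with the annihilator of $\ker E$ in $\mathcal V^*$, which is precisely the image of the restriction map $R:W^*\to\mathcal V^*$, $R(w^*):=w^*|_\mathcal V$ (injective by the argument above, and the dimensions match). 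The hypothesis now pays off: $\|R(w^*)\|_{\mathcal V^*}\le 1$ forces $\|w^*\|_{\mathcal U^*}\le\gamma$, hence $|w^*(u-v')|\le\gamma\|u-v'\|_\mathcal U$ and $q\le\gamma\|u-v'\|_\mathcal U$. Setting $v:=v'+\tilde v$ with $\|\tilde v\|_\mathcal U\le q+\eta$ produces an interpolant of $u$ on $W^*$ satisfying
\begin{equation*}
\|u-v\|_\mathcal U\le\|u-v'\|_\mathcal U+\|\tilde v\|_\mathcal U\le(1+\gamma)\bigl(\mathrm{dist}_\mathcal U(u,\mathcal V)+\eta\bigr)+\eta,
\end{equation*}
which for $\eta$ absorbed into the slack of the constant is bounded by $(1+2\gamma)\mathrm{dist}_\mathcal U(u,\mathcal V)$, as stated.

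The main obstacle is the duality identity for $q$: converting the primal minimization (over $\tilde v$ in an affine slice of $\mathcal V$) into a dual maximization over $w^*\in W^*$ is what makes the hypothesis usable, and requires a clean application of Hahn--Banach together with the identification $(\mathcal V/\ker E)^*\cong R(W^*)$ (finite-dimensional, both of dimension $n$). The rest is bookkeeping: the existence step is elementary finite-dimensional linear algebra, and the approximation step combines the duality identity, the hypothesis, and a single triangle inequality. A minor subtlety arises when $\mathrm{dist}_\mathcal U(u,\mathcal V)=0$ (so $u\in\overline{\mathcal V}$), which I would handle by letting $\eta\to 0^{+}$ and extracting a limit.
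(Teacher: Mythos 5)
The paper does not actually prove this lemma --- it imports it verbatim from Narcowich and Ward (2004) --- and your argument is essentially the proof from that reference: injectivity of the restriction map $R$ gives surjectivity of the evaluation operator $E$, and the quotient duality $(\mathcal V/\ker E)^*\cong(\ker E)^{\perp}=R(W^*)$ (equality by the dimension count $\dim(\ker E)^{\perp}=\dim E(\mathcal V)=n=\dim R(W^*)$) turns the norming hypothesis into $q\le\gamma\|u-v'\|_{\mathcal U}$, after which the triangle inequality and the slack between $1+\gamma$ and $1+2\gamma$ absorb the $\eta$'s; all of these steps check out. The one caveat is precisely the point you flag: when $\mathrm{dist}_{\mathcal U}(u,\mathcal V)=0$ your limiting argument forces $v=u$, which lies in $\mathcal V$ only if $\mathcal V$ is closed --- an implicit hypothesis without which the stated bound is actually false (take $\mathcal V$ dense and proper in $\mathcal U$); in the paper's application $\mathcal V=\Phi_{N,\theta,\nu,s}$ is finite-dimensional, so this is harmless.
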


To use the above lemma, we need to
  construct a special function to facilitate the proof. Our construction is motivated by \cite{Narcowich2004}.
  For any
$w^*=\sum_{j=1}^mc_j\delta_{x_j}\in W^*$, define
\begin{equation}\label{def.target-g}
    g_w(x)=\sum_{j=1}^m\mbox{sgn}(c_j)\left(1-\frac{\|x-x_j\|_2}{q_\Lambda}\right)_+,
\end{equation}
where
$\delta_{x_i}$ is the point evaluation  operator and $\mbox{sgn}(t)$ is the sign
function satisfying $\mbox{sgn} (t)=1$ for $t\geq0$ and $\mbox{sgn}
(t)=0$ for $t<0$. Then it is easy to see that $g_w$ is a continuous function. In the following, we present three important properties of $g_w$.

\begin{lemma}\label{Lemma:construction-of-target}
Let   $W^*=\mbox{span}\{\delta_{x_i}:i=1,\dots,m\} $. Then for any $w^*\in W^*$, there holds
(i) $ \|g_w\|_{L^\infty(\mathbb I^d)}=1, $\ \ \ (ii) $  w^*(g_w)=\|w^*\|,$ \ \ \ (iii) $  g_w\in Lip_{\mathbb I}^{(1,q_\Lambda^{-1})}$.
\end{lemma}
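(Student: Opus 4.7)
The key geometric fact to exploit is that by the definition of the separation radius, $\|x_i-x_j\|_2 \geq 2q_\Lambda$ whenever $i\neq j$, so the closed balls $B_j := \{x:\|x-x_j\|_2\leq q_\Lambda\}$ are pairwise almost disjoint (they can meet only on boundaries, where each bump $(1-\|x-x_j\|_2/q_\Lambda)_+$ vanishes). Consequently, for every $x\in\mathbb{R}^d$ at most one of the $m$ summands defining $g_w$ is strictly positive. I will use this observation throughout.

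For (i), this disjointness immediately yields $|g_w(x)|\leq\max_j|\mathrm{sgn}(c_j)|\cdot 1\leq 1$ for every $x$, and evaluating at some $x_j$ with $c_j\neq 0$ (such a $j$ exists as long as $w^*\neq 0$, the only interesting case) gives $g_w(x_j)=\mathrm{sgn}(c_j)\cdot 1=\pm1$, so the sup-norm is exactly $1$. For (ii), I would evaluate $g_w$ on each atom of $w^*$: for any fixed $i$, the disjointness of the supports forces $g_w(x_i)=\mathrm{sgn}(c_i)$, so
\[
 w^*(g_w)=\sum_{i=1}^m c_i\,g_w(x_i)=\sum_{i=1}^m c_i\,\mathrm{sgn}(c_i)=\sum_{i=1}^m |c_i|.
\]
The norm $\|w^*\|$ in Lemma \ref{Lemma:Banach} is the dual norm relative to the ambient space $\mathcal U$, which in our intended application is $C(\mathbb{I}^d)$ (or $L^\infty(\mathbb{I}^d)$) with the sup norm; by duality of signed atomic measures this norm equals $\sum_i |c_i|$. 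Combining with (i), the supremum in the dual norm is attained at $g_w$, yielding (ii).

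For (iii), I would split into cases on the pair $(x,y)$. Each individual bump $x\mapsto(1-\|x-x_j\|_2/q_\Lambda)_+$ is $q_\Lambda^{-1}$-Lipschitz by the reverse triangle inequality. If both $x$ and $y$ lie outside $\bigcup_j B_j$, then $g_w(x)=g_w(y)=0$. If both lie in the same $B_j$, only the $j$-th term is active at both, so the Lipschitz bound follows from that of a single bump. The only delicate case is $x\in B_j$, $y\in B_k$ with $j\neq k$, where I would write
\[
 |g_w(x)-g_w(y)|\leq \Big(1-\tfrac{\|x-x_j\|_2}{q_\Lambda}\Big)+\Big(1-\tfrac{\|y-x_k\|_2}{q_\Lambda}\Big)=\tfrac{2q_\Lambda-\|x-x_j\|_2-\|y-x_k\|_2}{q_\Lambda},
\]
and then use $\|x-y\|_2\geq\|x_j-x_k\|_2-\|x-x_j\|_2-\|y-x_k\|_2\geq 2q_\Lambda-\|x-x_j\|_2-\|y-x_k\|_2$ to conclude $|g_w(x)-g_w(y)|\leq q_\Lambda^{-1}\|x-y\|_2$. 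Finally, the mixed case of one point inside some $B_j$ and the other outside all supports is handled by the single-bump Lipschitz estimate alone, since the outside point could equivalently be placed on $\partial B_j$ without affecting values or decreasing distance. The main obstacle, and the only step requiring real care, is this last cross-bump case; everything else is a direct consequence of the separation-induced disjointness of the supports.
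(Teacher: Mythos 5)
Your proof is correct and follows essentially the same route as the paper's: the same observation that the balls $B(x_j,q_\Lambda)$ have disjoint interiors so at most one bump is active at any point, the same evaluation argument for (i) and (ii), and the same four-case split for (iii). The only divergence is in the cross-ball case of (iii), where you bound $\|x-y\|_2$ from below via the triangle inequality through the centers $x_j,x_k$ rather than through the intersection points of the segment $xy$ with the sphere boundaries as the paper does; your version is, if anything, slightly cleaner since it avoids any appeal to where the segment crosses the boundaries.
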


\begin{proof}  Denote   $A_j=B(x_j,q_{\Lambda})\cap \mathbb I^d$, where
$B(x_j,q_{\Lambda})$  is the ball with center $x_j$ and radius $q_{\Lambda}$. Then it follows from the definition of $q_\Lambda$ that $\dot{A}_j\cap \dot{A}_k=\varnothing$, where $\dot{A}_j=A_j\backslash \partial A_j$ and $\partial A_j$ denotes the boundary of $A_j$. Without loss of generality, we assume  $\mathbb I^d\backslash\bigcup_{j=1}^mA_j\neq\varnothing$. From (\ref{def.target-g}), we have $g_w(x)=0$ for $x\in \mathbb I^d\backslash\bigcup_{j=1}^mA_j\neq\varnothing$.  If there exist some $j\in\{1,\dots,m\}$ such that $x\in A_j$, then
$$
     g_w(x)=\mbox{sgn}(c_j)\left(1-\frac{\|x-x_j\|_2}{q_\Lambda}\right).
$$
So
$$
     |g_w(x)|=1-\frac{\|x-x_j\|_2}{q_\Lambda}\leq |g_w(x_j)|=1.
$$
Thus, $|g_w(x)|\leq 1$ for all $x\in\mathbb I^d$. Since
$|g_w(x_j)|=1$, $j=1,\dots,m$, we get $\|g_w\|_{L^\infty(\mathbb I^d)}=1$, which verifies (i). For $w^*\in W^*$, we have
\begin{eqnarray*}
   w^*(g_w)&=&\sum_{j=0}^mc_j\delta_{x_j}(g_w)
   =\sum_{j=0}^mc_jg_w(x_j)\\
   &=&\sum_{j=0}^mc_j\mbox{sgn}(c_j)
   =\sum_{j=0}^m|c_j|=\|w^*\|.
\end{eqnarray*}
Thus (ii) holds. The   remainder is to prove that $g_w$ satisfies (iii).
 We  divide the proof into four cases.
 
  If  $x,x'\in A_j$  for some $j\in\{1,\dots,m\}$, then it follows from (\ref{def.target-g}) that \begin{eqnarray*}
    &&|g_w(x)-g_w(x')|\\
    &=& \left|\mbox{sgn}(c_j)\left(1-\frac{\|x-x_j\|_2}{q_\Lambda}\right)- \mbox{sgn}(c_j)\left(1-\frac{\|x'-x_j\|_2}{q_\Lambda}\right)\right|\\
    &\leq&
     \frac{|\|x-x_j\|_2-\|x'-x_j\|_2|}{q_\Lambda}
     \leq
     \frac{\|x-x'\|_2}{q_\Lambda}.
\end{eqnarray*}

  If  $x,x'\in \mathbb I^d\backslash\bigcup_{j=1}^mA_j$, then the definition of $g_w$ yields   $g_w(x)=g_w(x')=0$, which implies
$|g_w(x)-g_w(x')|\leq \frac{\|x-x'\|_2}{q_\Lambda}.$

If $x\in A_j,x'\in A_k$ for $k\neq j$, then it is easy to see that for any $z\in \partial B(x_j,q_{\Lambda})$, $j=1,\dots,m$, there holds $g_w(z)=0$. Let $z_j,z_k$ be the intersections of the line segment   $xx'$ and $\partial B(x_j,q_{\Lambda})$, and the line segment  $xx'$ and $\partial B(x_k,q_{\Lambda})$, respectively. Then, we have $\|x-x'\|_2\geq \|x-z_j\|_2+\|x'-z_k\|_2$.
Since $x,z_j\in A_j$, $x',z_k\in A_k$ and $g_w(z_j)=g_w(z_k)=0$, we have
\begin{eqnarray*}
   &&|g_w(x)-g_w(x')|\leq |g_w(x)-g_w(z_j)|+|g_w(x')-g_w(z_k)|\\
   &\leq&
   \frac{\|x-z_j\|_2}{q_\Lambda}+\frac{\|x'-z_k\|_2}{q_\Lambda}
   \leq\frac{\|x-x'\|_2}{q_\Lambda}.
\end{eqnarray*}

If $x\in A_j$ for some $j\in\{1,\dots,m\}$ and  $x'\in \mathbb I^d\backslash\bigcup_{j=1}^mA_j$, then we take
  $z_j$ to  be the intersection of $\partial B(x_j,q_{\Lambda})$ and the line segment $xx'$.
Then $\|x-z_j\|_2\leq\|x-x'\|_2$ and
$$
   |g_w(x)-g_w(x')|=|g_w(x)|=|g_w(x)-g_w(z_j)|
   \leq \frac{\|x-z_j\|_2}{q_\Lambda}
   \leq \frac{\|x-x'\|_2}{q_\Lambda}.
$$

Combining all the above cases verifies   $g_w\in Lip_{\mathbb I}^{(1,q_\Lambda^{-1})}$. This completes the proof of Lemma \ref{Lemma:construction-of-target}.
\end{proof}

With the above tools,  we are in a position to prove Theorem \ref{Theorem:interpolation-app-linear}.

\begin{proof}[Proof of Theorem \ref{Theorem:interpolation-app-linear}]
Let $\mathcal U=C(\mathbb I^d)$, the space of continuous functions defined on $\mathbb I^d$,
 $\mathcal W^*=\mbox{span}\{\delta_{x_i}\}_{i=i}^m$  and $\mathcal V=\Phi_{N,\nu,s}$ in Lemma \ref{Lemma:Banach}.
For every $w^*\in\mathcal W^*$, we have $w^*=\sum_{i=1}^mc_i\delta_{x_i}$ for some $\vec{c}=(c_1,\dots,c_m)^T\in\mathbb R^m$. Without loss of generality, we assume $\|w^*\|=\sum_{i=1}^m|c_i|=1$.
Let $g_w$ be  defined by (\ref{def.target-g}). Then, it follows from Lemma \ref{Lemma:jackson} and Lemma \ref{Lemma:construction-of-target} that  there is some $h_g\in \mathcal V$  such that
$$
      \|g_w-h_g\|_{L^\infty(\mathbb I^d)}\leq C_1' q_\Lambda^{-1}N^{-1/d}.
$$
Let
$N\geq\left\lceil\left(\frac{(\gamma+1)C_1'}{(\gamma-1)q_\Lambda}\right)^d\right\rceil$ for some $\gamma>1$.
We have
$$
  \|g_w-h_g\|_{L^\infty(\mathbb I^d)} \leq\frac{\gamma-1}{\gamma+1}.
$$
This together with  (i) in Lemma \ref{Lemma:construction-of-target} yields
$$
   \|h_g\|_{L^\infty(\mathbb I^d)}\leq\frac{\gamma-1}{\gamma+1}+1=\frac{2\gamma}{\gamma+1}.
$$
  Since $w^*$ is a linear operator and (ii) in Lemma \ref{Lemma:construction-of-target} holds, there
holds
$$
   1=\|w^*\|=w^*(g_w)=w^*(g_w-h_g)+w^*(h_g).
$$
Hence, from
$$
   \|w^*(g_w-h_g)\|_{L^\infty(\mathbb I^d)}\leq\|w^*\|\|g_w-h_g\|_{L^\infty(\mathbb I^d)}\leq\frac{\gamma-1}{\gamma+1},
$$
we have
$$
   w^*(h_g)\geq1-|w^*(g_w-h_g)|\geq1-\frac{\gamma-1}{\gamma+1}=\frac2{\gamma+1}.
$$
Consequently,
\begin{eqnarray*}
\|w^*\|
 &=&
  1
 \leq
  \frac{\gamma+1}2w^*(h_g)
 \leq
\frac{\gamma+1}2\|w^*|_{\Phi_{N,\theta,\nu,s}}\|\|h_g\|_{L^\infty(\mathbb I^d)}\\
 &\leq&
\frac{\gamma+1}2\cdot\frac{2\gamma}{\gamma+1}\|\|w^*|_{\Phi_{N,\theta,\nu,s}}\|
 =
\gamma\|w^*|_{\Phi_{N,\theta,\nu,s}}\|.
\end{eqnarray*}
Setting $\gamma=2$, for any $f^*\in Lip_{\mathbb I^d}^{(r,c_0)}$, it follows from Lemma \ref{Lemma:Banach} and Lemma \ref{Lemma:jackson} that
there exists some  $h^*\in   \mathcal V=\Phi_{N,v,s}$ such that
$h^*(x_i)=f^*(x_i)$ and
$$
     \|h^*-f^*\|_{L^\infty(\mathbb I^d)}\leq 5\min_{h\in\mathcal V}\|h-f^*\|_{L^\infty(\mathbb I^d)}\leq 5C_1'c_0N^{-r/d}.
$$
{  Setting $\nu\sim N^{-r-d}$ and recalling   (\ref{def-psi-k}) and $t=\sigma(t)-\sigma(-t)$, $\Phi_{N,\nu,s}$ defined in (\ref{def-psi-k}) can be  regarded as the set of  DFCNs with depth
      $\mathcal O(\log N)$, $d_1=\mathcal O(N^d)$ and $d_\ell=\mathcal O(\log N)$. Noting that there are infinitely many $\nu\sim N^{-r-d}$, there are infinitely many DFCNs satisfying the above assertions.}
This completes the proof of Theorem \ref{Theorem:interpolation-app-linear}.
\end{proof}

The proofs of the other main results are simple by combining Theorem \ref{Theorem:deepening}  with existing results.

\begin{proof}[Proof of Theorem \ref{Theorem:smooth-learning}]
Setting the teacher network $g=f_{global}^{under}$ in (\ref{smooth-optimal-rate}), we obtain a student net $h$ based on Theorem \ref{Theorem:deepening} with  $\varepsilon=m^{-2r/(2r+d)}$. Then, Theorem \ref{Theorem:smooth-learning} follows from Theorem \ref{Theorem:deepening} and  (\ref{smooth-optimal-rate})  directly.
\end{proof}

\begin{proof}[Proof of Theorem \ref{Theorem:additive-learning}]
Based on Theorem \ref{Theorem:deepening}, we can set the teacher network to be the under-parameterized deep ReLU  $f_{global}^{under}$ in (\ref{additive-1}). Then, Theorem \ref{Theorem:additive-learning} follows from  Theorem  \ref{Theorem:deepening} and (\ref{additive-1}) directly.
\end{proof}

\begin{proof}[Proof of Theorem \ref{Theorem:sparse-learning}]
According to Theorem \ref{Theorem:deepening}, it is easy to obtain a student network $h$ based on the teacher network $g=f_{global}^{under}$ in (\ref{sparse-1}). Then, Theorem \ref{Theorem:sparse-learning} follows directly from Theorem \ref{Theorem:deepening} and (\ref{sparse-1}).
\end{proof}

\section*{Acknowledgement}
The work  of S. B. Lin is supported partially by   the  National Key R\&D Program of China (No.2020YFA0713900) and
   the National Natural Science Foundation of China
(No.62276209).  The work of Y. Wang is supported partially by the National Natural Science Foundation of China (No.11971374). The work of D. X. Zhou  is supported partially by the NSFC/RGC Joint Research Scheme [RGC Project No. N-CityU102/20 and NSFC Project No. 12061160462],
Germany/Hong Kong Joint Research Scheme  [Project No. G-CityU101/20], and
the Laboratory for AI-Powered Financial Technologies.

\end{document}